\newtheorem{theorem}{Theorem}
\newtheorem{assumption}{Assumption}
\newtheorem{lemma}{Lemma}
\def\ie{\emph{i.e.\ }}
\def\mb{\mathbf}
\crefname{section}{Sec.}{Secs.}
\Crefname{section}{Section}{Sections}
\Crefname{table}{Table}{Tables}
\crefname{table}{Tab.}{Tabs.}
\crefname{algoritm}{Algorithm}{Algorithms}
\begin{document}
%
\title{Inverse-Free Fast Natural Gradient Descent Method for Deep Learning}
%
%
%

\author{Xinwei~Ou, Ce~Zhu,~\IEEEmembership{Fellow,~IEEE,}
Xiaolin~Huang,~\IEEEmembership{Member,~IEEE,}
and~Yipeng~Liu,~\IEEEmembership{Member,~IEEE}
\thanks{Xinwei Ou, Ce Zhu, and Yipeng Liu (corresponding author) are with the School of Information and Communication
Engineering, University of Electronic Science and Technology of China (UESTC), Chengdu, 611731, China (e-mails: xinweiou@std.uestc.edu.cn, $ \{$eczhu, yipengliu$\}$@uestc.edu.cn).
}
\thanks{Xiaolin Huang is with the Department of Automation, Shanghai Jiao Tong University, Shanghai, 200240, China (e-mails: xiaolinhuang@sjtu.edu.cn).}}
\maketitle

\begin{abstract}
Second-order optimization techniques have the potential to achieve faster convergence rates compared to first-order methods through the incorporation of second-order derivatives or statistics. However, their utilization in deep learning is limited due to their computational inefficiency. Various approaches have been proposed to address this issue, primarily centered on minimizing the size of the matrix to be inverted. Nevertheless, the necessity of performing the inverse operation iteratively persists. In this work, we present a fast natural gradient descent (FNGD) method that only requires inversion during the first epoch. Specifically, it is revealed that natural gradient descent (NGD) is essentially a weighted sum of per-sample gradients. Our novel approach further proposes to share these weighted coefficients across epochs without affecting empirical performance. Consequently, FNGD exhibits similarities to the average sum in first-order methods, leading to the computational complexity of FNGD being comparable to that of first-order methods. Extensive experiments on image classification and machine translation tasks demonstrate the efficiency of the proposed FNGD. For training ResNet-18 on CIFAR-100, FNGD can achieve a speedup of 2.07$\times$ compared with KFAC. For training Transformer on Multi30K, FNGD outperforms AdamW by 24 BLEU score while requiring almost the same training time.

\end{abstract}

\begin{IEEEkeywords}
Second-order Optimization, Natural Gradient Descent, Deep Learning, Per-sample Gradient.
\end{IEEEkeywords}

%
\IEEEpeerreviewmaketitle

\section{Introduction}
\label{sec:intro}
\begin{figure*}[h]
    \centering
    \includegraphics[scale=0.5]{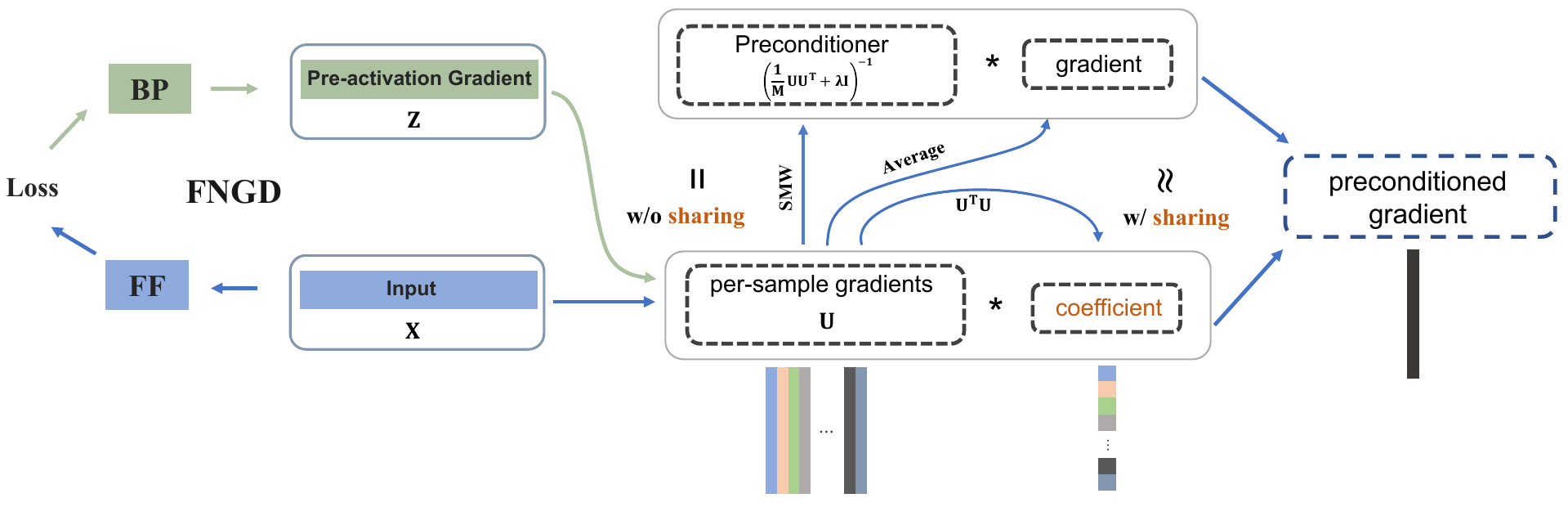}
    \caption{Illustration of FNGD. The gradient preconditioning formula in NGD can be equivalent to a weighted sum of per-sample gradients. By sharing these weighted coefficients across epochs, the proposed FNGD approximates the preconditioning step as a fixed-coefficient weighted sum. This approach reduces the computational complexity of FNGD to that of SGD.}
    \label{fig:framework}
\end{figure*}
First-order methods, such as stochastic gradient descent (SGD) \cite{robbins1951stochastic}, and its adaptive learning rate variants including AdaGrad \cite{duchi2011adaptive}, RMSprop \cite{hinton2012neural}, and Adam \cite{kingma2014adam}, are dominant for training deep neural networks (DNNs), especially in large-scale tasks. Nonetheless, these methods face significant challenges. One key issue is the slow convergence observed in flat regions or the oscillations observed in steep regions. Additionally, first-order methods are highly sensitive to hyperparameter configurations, often requiring extensive tuning efforts.


 Second-order methods offer a solution to these problems \cite{yao2021adahessian, battiti1992first, botev2017practical}. Newton's method \cite{nocedal1999numerical}, a conventional second-order approach, involves preconditioning gradient with the inverse of Hessian matrix. This preconditioning operation enables gradient to be rotated and rescaled, aiding in the escape from ill-conditioned 'valleys'. However, due to the non-convex nature of DNNs, Hessian matrix may not always be positive semi-definite (PSD). Consequently, in practical applications, second-order methods often rely on approximations of Hessian that ensure PSD properties, such as the Fisher information matrix (FIM) \cite{amari1998natural}. 


In the field of statistical machine learning, FIM is defined as the covariance of score function. Given a probabilistic model, the score function represents the gradient of log-likelihood with respect to model parameters $\mb w$. FIM, serving as the corresponding covariance matrix, \ie$\mathbb{E}(\nabla_{\mb{w}}\nabla^\text{T}_{\mb{w}})$, has been proved to be equal to the negative expected Hessian of the model’s log-likelihood \cite{ly2017tutorial}. Consequently, FIM can serve as an alternative to Hessian. However, in the case of deep learning with $N$ parameters, the FIM matrix size is $N \times N$, and the computational complexity for its inversion is $\operatorname{O}(N^3)$, rendering it impractical for DNNs with millions of parameters.


To bridge this gap, a series of algorithms \cite{martens2015optimizing, george2018fast, tang2021skfac} have been developed to approximate FIM by employing a block-diagonal structure, with each block corresponding to a specific layer. This block-wise approximation effectively eliminates inter-layer correlations. Furthermore, recognizing the low-rank nature of FIM, advanced methods \cite{ren2019efficient, tang2021skfac} utilize Sherman-Morrison-Woodbury (SMW) formula \cite{hager1989updating} to efficiently compute the matrix inverse. However, these techniques still necessitate the inverse operator in each epoch. As a result, the end-to-end training time of NGD may approach or even surpass that of SGD. 

To address this issue, this work presents a fast natural gradient descent (FNGD) method in which the inverse operator is performed exclusively during the first epoch. Firstly, we find that the gradient preconditioning formula in natural gradient descent (NGD) can be reformulated by SMW into a matrix-vector multiplication. It has the interpretation of a weighted sum of per-sample gradients. By re-arranging the computation order, we decrease the preconditioning computational complexity from $\operatorname{O}(N_lM^2+N_l^2M+N_l^2)$ to $\operatorname{O}(M^2+N_lM)$, where $N_l$ represents the number of parameters in layer $l$ and $M$ represents the batch size. Furthermore, it is analyzed that the weighted coefficient vector is solely determined by a correlation matrix that reveals the correlation of samples. This observation inspires us to share it across epochs, making FNGD akin to the average sum in SGD. As a result, the computational complexity of FNGD can approach that of SGD. We provide a complexity comparison between FNGD\footnote{We ignore the minor computational cost associated with computing coefficients in the first epoch.} and conventional second-order methods in \cref{tab:time}.

\begin{table}[htbp]
   
    \caption{Complexity comparison of different optimization algorithms. $D$ is the dimension of a hidden layer, and $M$ is the batch size.}
    \centering
    \begin{tabular}{c|c|c|c|c}
        \hline
         Method&  Statistics & Inverse & Precondition & Memory\\
         \hline
         KFAC \cite{martens2015optimizing} & $\operatorname{O}(2MD^2)$ & $\operatorname{O}(2D^3)$ &  $\operatorname{O}(2D^3)$ & $\operatorname{O}(2D^2)$\\
         \hline
         Eva \cite{zhang2023eva}&$\operatorname{O}(2MD)$ & - & $\operatorname{O}(2D^2)$ & $\operatorname{O}(2D)$\\
         \hline
         Shampoo \cite{gupta2018shampoo}&$\operatorname{O}(2D^3)$ & $\operatorname{O}(2D^3)$ & $\operatorname{O}(2D^3)$ & $\operatorname{O}(2D^2)$\\
         \hline
         FNGD &- & - & \boldmath{$\operatorname{O}(MD)$} & \boldmath{$\operatorname{O}(M)$}\\
         \hline
    \end{tabular}
     \label{tab:time}
\end{table}

We conduct numerical experiments on image classification and machine translation tasks to demonstrate the effectiveness and efficiency of FNGD. In the task of image classification, FNGD can yield comparable convergence and generalization performance to conventional second-order methods, such as KFAC \cite{martens2015optimizing}, Shampoo \cite{gupta2018shampoo}, and Eva \cite{zhang2023eva}. Additionally, FNGD surpasses these methods in terms of computational efficiency. Specifically, in comparison to KFAC, Shampoo, and Eva, FNGD can achieve time reductions of up to 2.65$\times$, 2.03$\times$, and 1.73$\times$, respectively.  In the context of machine translation with Transformer, FNGD outperforms AdamW by 24 BLEU score on Multi30K while requiring almost the same training time. Furthermore, when compared with other second-order methods, FNGD is approximately 2.4$\times$ faster than KFAC and 5.7$\times$ faster than Shampoo.


The main contributions of our work can be summarized as follows:
\begin{itemize}
    \item We reformulate the gradient preconditioning formula of NGD as a weighted sum of per-sample gradients using SMW formula. It establishes a connection between NGD and SGD. 
    \item We propose sharing these weighted coefficients across epochs by analyzing the physical meaning of coefficients. This approach eliminates the need to perform inverse operator for coefficients computation in every iteration, except in the initial epoch.
    \item An efficient preconditioning strategy is proposed to accelerate the implementation of FNGD, including efficient backward and weighted sum.
    \item we apply our approach for image classification and machine translation tasks to demonstrate its time efficiency.
\end{itemize}
\section{Related Work}
Natural gradient methods rely on FIM to precondition the gradient. To make it practical for DNNs, plenty of works apply a block-diagonal approximation to the FIM. In addition to this fundamental approximation,  various techniques have been introduced to further reduce the computational complexity. There are primarily two approaches: making further approximation or reducing the inverse complexity. In \cref{fig:approximation}, we outline several methods that offer further approximations of FIM, including KFAC \cite{martens2015optimizing}, Shampoo \cite{gupta2018shampoo}, Eva \cite{zhang2023eva}, and MBF \cite{bahamou2022mini}. To enhance the efficiency of computing the inverse, SKFAC \cite{tang2021skfac} proposed to employ the SMW formula to decrease the size of the matrix to be inverted. Furthermore, HyLo \cite{mu2022hylo} has the capability to decrease the matrix size by selecting important training samples, thereby improving the scalability of KFAC on distributed platforms.
\begin{figure}[htbp!]
    \centering
    \includegraphics[scale=0.4]{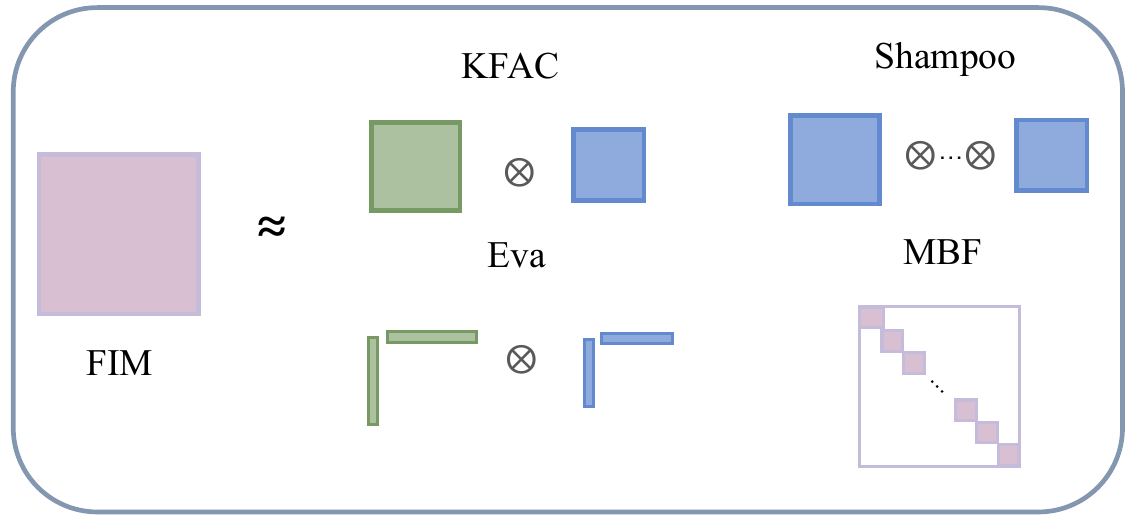}
    \caption{Several existing types of FIM approximation. The green block represents feed-forward statistics, while the blue block represents back-propagation statistics.}
    \label{fig:approximation}
\end{figure}

The Fisher approximation techniques discussed above primarily focus on computing the statistics or the inverse with low costs. However, these methods do not delve into analyzing the structure of preconditioned gradients. FNGD, on the other hand, strategically addresses this aspect to reduce the computational complexity effectively.


\section{Preliminaries}
\label{sec:formatting}
\subsection{Notation}
Scalars are denoted by letters, e.g., $a$. Vectors are denoted by boldface lowercase letters, e.g., $\mb{a}$. Matrices are denoted by boldface capital letters, e.g., $\mb{A}$. Higher-order tensors are denoted by Euler script letters, e.g., $\mathcal{X}$. The symbol $\odot$ signifies the Khatri-Rao product; $\otimes$ signifies the Kronecker product; $*$ signifies the Hadamard product; and $||\cdot||_\text{F}$ signifies the Frobenius norm of a matrix. The operator $\text{vec}(\cdot)$ reshapes a matrix into a vector by stacking rows.

\subsection{Second-order Method}
We consider a neural network $\operatorname{f}$ that maps the input data $\mb{x}_m$ with target $\mb{y}_m$ to an output prediction $\operatorname{f}(\mb{x}_m, \mb{w})$, where $\mb{w} \in \mathbb{R}^{N}$ consists of all parameters in the network. Training the network can be regarded as minimizing the cost function denoted by $\operatorname{\ell}$, e.g., cross-entropy. In each iteration with a batch of $M$ samples, the objective is formulated as $\mathcal{L}(\mb{w})=\frac{1}{M}\sum_{m=1}^{M}\operatorname{\ell}(\operatorname{f}(\mb{x}_m,\mb{w}), \mb{y}_m)$. For second-order methods, the parameter update can be formulated as follows:
\begin{equation}
    \mb{w} \leftarrow \mb{w} - \eta \mathbf{B}^{-1}\mb{g},
\end{equation}
where $\mb{g} = \frac{\partial \mathcal{L}}{\partial \mb{w}}$ represents the gradient of the objective with respect to $\mb{w}$, $\eta$ is a positive learning rate, and $\mb{B}$ involves curvature information of the loss landscape, named preconditioner. In the case of $\mb{B}=\mb{I}$, the second-order method degenerates into SGD. For the natural gradient descent method, the FIM is employed as the preconditioner.




\subsection{Natural Gradient Method}

FIM, an approximation to the Hessian, serves as the preconditioner in the natural gradient method.  To avoid the extra backward pass associated with FIM computation, EFM becomes a practical alternative to FIM \cite{bahamou2022mini}. The EFM is defined as:
\begin{equation}
    \mathbf{F} = \frac{1}{M}\sum_{m=1}^{M}\frac{\partial\operatorname{\ell}(\operatorname{f}(\mb{x}_m,\mb{w}),\mb{y}_m)}{\partial \mb{w}}(\frac{\partial \operatorname{\ell}(\operatorname{f}(\mb{x}_m,\mb{w}),\mb{y}_m)}{\partial \mb{w}})^\text{T}.
\end{equation}
The pairs $(\mb{x}_m, \mb{y}_m)$ are from the training dataset. Defining a matrix $\mathbf{U}=[\frac{\partial \operatorname{\ell}(\operatorname{f}(\mb{x}_1,\mb{w}),\mb{y}_1)}{\partial \mb{w}}, \frac{\partial \operatorname{\ell}(\operatorname{f}(\mb{x}_2,\mb{w}),\mb{y}_2)}{\partial \mb{w}}, \cdots, \frac{\partial \operatorname{\ell}(\operatorname{f}(\mb{x}_M,\mb{w}),\mb{y}_M)}{\partial \mb{w}}] \in \mathbb{R}^{N \times M}$, named the Jacobian matrix of the loss, the EFM can be represented in terms of $\mathbf{U}$ as follows:
\begin{equation}
    \mathbf{F} = \frac{1}{M}\mathbf{U}\mathbf{U}^\text{T}.
\end{equation}

In the context of deep learning, a mini-batch strategy is employed to alleviate the computational burden. This mini-batch approximation results in the low-rank characteristic of EFM \cite{ren2019efficient}. It necessitates the addition of $\lambda \mathbf{I}$ to ensure the invertibility of EFM (namely, the Levenberg-Marquardt (LM) method \cite{more2006levenberg}), where $\lambda$ is a damping parameter. The updating of network parameters is formulated as:
\begin{equation}\label{updata}
    \mb{w} \leftarrow \mb{w} - (\frac{1}{M}\mathbf{U}\mathbf{U}^\text{T}+\lambda \mathbf{I})^{-1} \mb{g}.
\end{equation}

Analogous to KFAC, we apply the block-diagonal approximation on the EFM. Consequently, the parameters of each layer can be updated separately. For layer $l$, we have the updating rule as follows:
\begin{equation}
     \mb{w}_l \leftarrow \mb{w}_l - (\frac{1}{M}\mathbf{U}_l\mathbf{U}_l^\text{T} + \lambda \mathbf{I})^{-1} \mb{g}_l,
\end{equation}
where $\mb{w}_l \in \mathbb{R}^{N_l}$ is the parameter of layer $l$, $\mb{g}_l\in \mathbb{R}^{N_l}$ is the gradient with respect to $\mb{w}_l$, and $\mathbf{U}_l\in \mathbb{R}^{N_l\times M}$.

\section{Proposed Method}

Firstly, based on the well-known SMW formula, we propose to restructure the updating formula in \cref{updata}. This adjustment allows for the interpretation of the preconditioned gradient as a weighted sum of per-sample gradients. By re-arranging computation order, we can decrease the preconditioning computational complexity from $\operatorname{O}(N_lM^2+N_l^2M+N_l^2)$ to $\operatorname{O}(M^2+N_lM)$. Furthermore, by sharing these weighted coefficients across epochs, we approximate the preconditioning step in NGD as a fixed-coefficient weighted sum, which closely resembles the average sum in SGD. Consequently, the theoretical complexity of FNGD is comparable to that of SGD. For the implementation, we provide a discussion on how to compute the per-sample gradient and perform the weighted sum efficiently.
The framework of FNGD is depicted in \cref{alg1}.
\begin{algorithm} 
\caption{Framework of FNGD} 
\label{alg1} 
\begin{algorithmic}[1] 
\REQUIRE Learning rate: $\eta$; Damping value: $\lambda$; Batch size: $M$; number of batches: $B$. 

\tcp{\footnotesize Derive coefficients during the first epoch}
\FOR{$i=1$ to $B$} 
\STATE Sample mini-batch of size $M$;
\STATE Perform forward-backward pass to compute $\mb U_i$;
\STATE Compute coefficients $\mb{v}_i$ based on \cref{coeffcient};
\STATE Perform parameter update: $\mb{w} \leftarrow \mb{w}-\eta \mb{U_i}\mb{v}_i$;
\ENDFOR
\STATE Compute average coefficient: $\Tilde{\mb{v}}=\frac{1}{B} \sum_{i=1}^B \mb{v}_i $;
\WHILE{not converage}
\FOR{$i=1$ to $B$}
\STATE Sample mini-batch of size $M$;
\STATE Perform forward-backward pass to derive gradients of layers' output $\mathcal{A}$;
\STATE Apply weighted coefficients on layers' inputs $\mathcal{B}$;
\STATE $\Tilde{\mb g}\leftarrow$ torch.nn.grad.layer\_weight($\mathcal{A}$, $\mathcal{B}$);
\STATE Perform parameter update: $\mb{w} \leftarrow \mb{w}-\eta \Tilde{\mb g}$;
\ENDFOR
\ENDWHILE
\end{algorithmic}
\end{algorithm}
\subsection{SMW-based NGD}
The SMW formula depicts how to efficiently compute the inverse of an invertible matrix perturbed by a low-rank matrix. Considering an invertible matrix $\mathbf{X}\in \mathbb{R}^{N\times N}$ and a rank-$K$ perturbation $\mathbf{A}\mathbf{B}$ with $\mathbf{A}\in\mathbb{R}^{N\times K}$ and $\mathbf{B}\in \mathbb{R}^{K\times N}$, the inverse of the matrix $\mb{X}+\mb{A}\mb{B}$ can be computed using $\mb{X}^{-1}$ as follows:
    \begin{equation}
         (\mathbf{X}+\mathbf{A}\mathbf{B})^{-1}=\mathbf{X}^{-1}-\mathbf{X}^{-1}\mathbf{A}(\mathbf{I}+\mathbf{B}\mathbf{X}^{-1}\mathbf{A})^{-1}\mathbf{B}\mathbf{X}^{-1}.
    \end{equation}

Based on the SMW formula and the low-rank property of $\mb{U}_l\mb{U}_l^\text{T}$, we can derive the inverse $(\lambda \mb{I}+\frac{1}{M}\mathbf{U}_l\mathbf{U}_l^\text{T})^{-1}$ as follows:
\begin{equation}
    (\lambda \mb{I}+\frac{1}{M}\mathbf{U}_l\mathbf{U}_l^\text{T})^{-1}=\frac{1}{\lambda}(\mathbf{I}-\frac{1}{M}\mathbf{U}_l(\lambda \mathbf{I}+\frac{1}{M}\mathbf{U}_l^\text{T}\mathbf{U}_l)^{-1}\mathbf{U}_l^\text{T}).
\end{equation} 
This approach reduces the size of the matrix to be inverted from $N_l\times N_l$ to $M\times M$. Therefore, as long as $M \ll N_l$ is satisfied, SMW-based NGD is much more favorable for devices with limited computational resources. 

The inverse is then utilized to precondition the gradient $\mb{g}_l$, as it does in \cite{ren2019efficient}. Assuming the calculation of $\mathbf{U}_l^\text{T}\mathbf{U}_l$ and $(\lambda \mathbf{I}+\frac{1}{M}\mathbf{U}_l^\text{T}\mathbf{U}_l)^{-1}$ have been completed, the remaining computational complexity is $\operatorname{O}(N_lM^2+N_l^2M+N_l^2)$. In order to decrease the complexity, we propose to re-arrange the multiplication order. Firstly, the preconditioning formula can be denoted as:
\begin{multline}
     (\lambda \mb{I}+\frac{1}{M}\mathbf{U}_l\mathbf{U}_l^\text{T})^{-1}\mb{g}_l = \\\frac{1}{\lambda}\mb{g}_l-\frac{1}{\lambda M}\mathbf{U}_l(\lambda \mathbf{I}+\frac{1}{m}\mathbf{U}_l^\text{T}\mathbf{U}_l)^{-1}\mathbf{U}_l^\text{T}\mb{g}_l.
\end{multline}
   
Furthermore, given that $\mb{g}_l$ represents the average gradient over the mini-batch, we can express $\mb{g}_l$ as the mean vector of the columns of matrix $\mathbf{U}_l$, \ie, $\mb{g}_l = \frac{1}{M}\mathbf{U}_l[1,1,\cdots,1]^\text{T}$. Building on this, we can reformulate the preconditioning equation as follows:
\begin{multline}
    (\lambda \mb{I}+\frac{1}{M}\mathbf{U}_l\mathbf{U}_l^\text{T})^{-1}\mb{g}_l=\\
    \frac{1}{\lambda M}\mathbf{U}_l([1,1,\cdots,1]^\text{T}-(\lambda \mathbf{I}+\frac{1}{M}\mathbf{U}_l^\text{T}\mathbf{U}_l)^{-1}\mathbf{U}_l^\text{T}\mb{g}_l).
    \label{coeffcient}
\end{multline}
It involves a matrix-vector multiplication. For the calculation of $\mathbf{U}_l^\text{T}\mb{g}_l$, it is equivalent to $\frac{1}{M}\mathbf{U}_l^\text{T}\mathbf{U}_l[1,1,\cdots,1]^\text{T}$, indicating that it equals the mean vector of the columns of $\mathbf{U}_l^\text{T}\mathbf{U}_l$. As a result, the computational burden mainly comes from two parts: the multiplication of the inverse with the mean vector, and the external matrix-vector multiplication. The preconditioning computational complexity is reduced to $\operatorname{O}(M^2+N_lM)$.

\subsection{Coefficient-Sharing}
In \cref{coeffcient}, we represent the preconditioning equation as a matrix-vector multiplication, highlighting that the preconditioned gradient is a weighted sum of per-sample gradients. Building upon this reformulation, we propose reasonably sharing the coefficient across iterations to avoid the computationally intensive inverse operation.

 Firstly, we can see that the weighted coefficient vector is solely determined by the matrix $\mathbf{U}_l^\text{T}\mathbf{U}_l$. The matrix $\mathbf{U}_l^\text{T}\mathbf{U}_l$ is, in essence, a Gram matrix, where each entry reveals the similarity between two training samples. Specifically, when two samples are distinct, their gradient directions might be orthogonal within the parameter space, leading to a near-zero entry in $\mathbf{U}_l^\text{T}\mathbf{U}_l$. In contrast, gradients for similar samples are likely to closely align in direction, yielding a significant entry in $\mathbf{U}_l^\text{T}\mathbf{U}_l$.

For each layer, we have an individual correlation matrix $\mathbf{U}_l^\text{T}\mathbf{U}_l$. This setting aligns with the concept of ``hierarchical feature learning" in deep learning \cite{goodfellow2016deep}. As training samples pass through the deep network, lower layers tend to capture basic features, while higher layers capture more abstract and complex features. Consequently, the correlation matrix of the higher layer is expected to differ significantly from that of the lower layer, whereas the adjacent layers may exhibit similar correlation matrices. We depict this point in \cref{fig:UTU}.
\begin{figure}[htbp]
    \centering
    \includegraphics[scale=0.55]{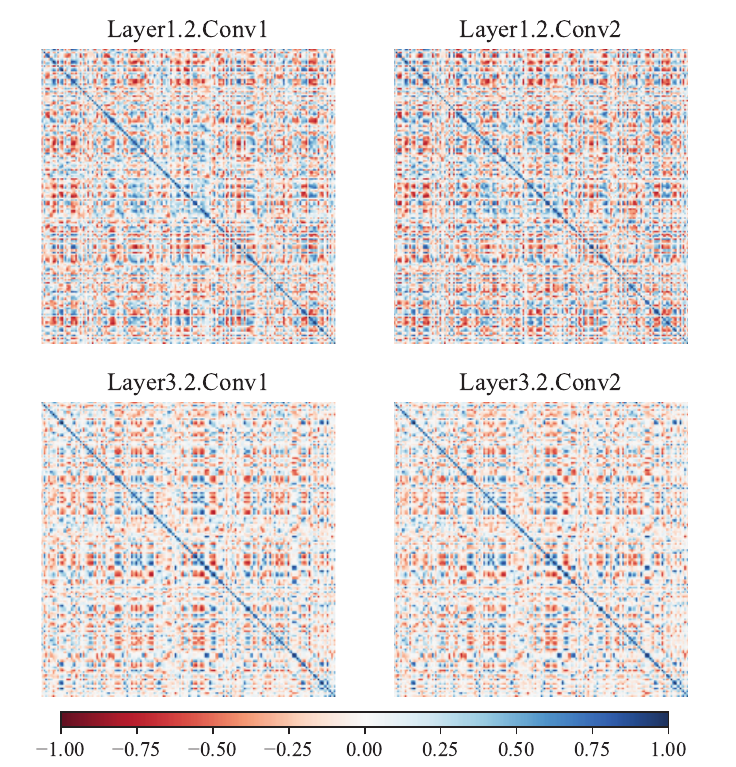}
    \caption{The normalized correlation matrix $\mb{U}_l^\text{T}\mb{U}_l$ for four layers in ResNet-32 \cite{he2016deep} on CIFAR-10 with batch size 128. The first two layers are adjacent, as are the last two layers.}
    \label{fig:UTU}
\end{figure}

 From an optimization perspective, the gradient weighted sum technique is an analogy to the weighted loss concept \cite{zadrozny2003cost,lin2017focal}, which is widely employed in machine learning to direct the optimization's attention. With fixed weighting coefficients, samples are assigned varying degrees of importance. However, a unique aspect of FNGD lies in its layer-specific optimization approach, where each layer is assigned distinct weighting coefficients. This characteristic might contribute to layer-specific feature learning.

Rather than randomly set coefficients, FNGD derives the coefficients of each layer by initially computing \cref{coeffcient} during the first epoch. By doing so, the coefficients of adjacent layers could be coupled. This level of interdependence among coefficients wouldn't be achieved through random initialization alone.

In practical applications, considering the mini-batch strategy, the weighted sum is performed within each mini-batch. Routinely, in order to enhance the generalization performance, the training dataset is shuffled before being divided into batches in each epoch. This shuffling operator randomizes the samples within each batch. However, our empirical experiment has found that coefficient-sharing across epochs remains effective despite the variability in samples within each batch. 

In \cref{fig:sharing}, we demonstrate the comparative results (with or without coefficient-sharing) for ResNet-32 on the Cifar-10 dataset. Remarkably, FNGD, the one with coefficient-sharing, achieves performance that is on par with the baseline NGD. On the other hand, thanks to the technique of coefficient-sharing, there is no need to compute the second-order information for epochs beyond the first epoch. Consequently, this results in a significant reduction in time costs. Specifically, FNGD is shown to be twice as fast as NGD.
\begin{figure}[htbp]
    \centering
    \includegraphics[scale=0.25]{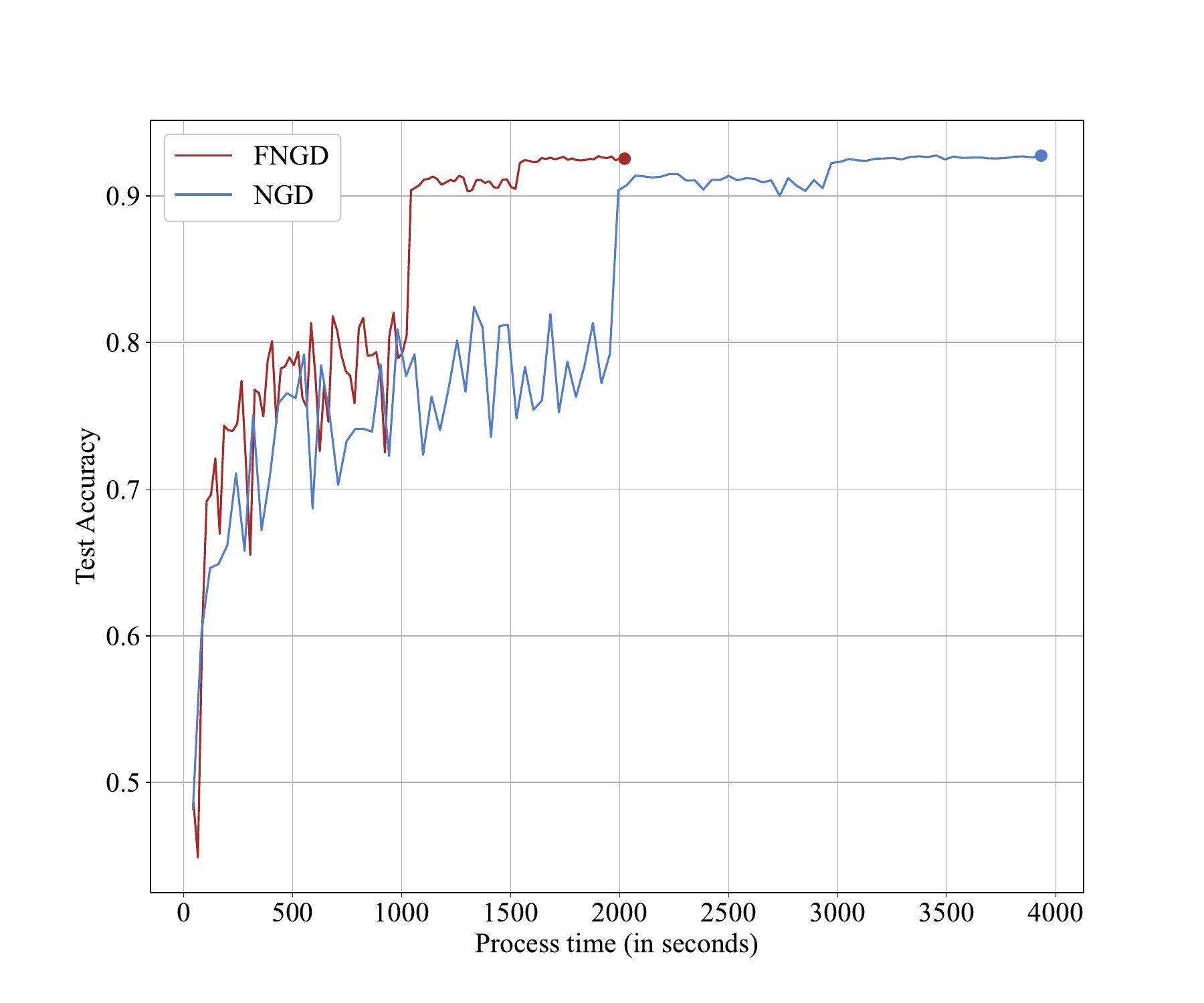}
    \caption{Performance comparison between FNGD and NGD for training ResNet-32 on Cifar-10. We refer to the method with coefficient-sharing as FNGD.}
    \label{fig:sharing}
\end{figure}

To clarify the effectiveness of FNGD, we can interpret the impact of shuffling-induced randomness from a different perspective. 
A sample with a large coefficient significantly influences the update process, while one with a small coefficient has a minor impact. As a result, with coefficients constant, FNGD randomly picks certain samples as key contributors to guide the optimization process. It may increase the model's robustness to noise.

\subsection{Per-sample Gradient} \label{pre-sample graident}
It is crucial to efficiently calculate the per-sample gradient for the computational efficiency of FNGD. Popular deep learning frameworks, like Pytorch and Tensorflow, return the average gradient over a batch of samples, rather than the gradient for each individual sample. This choice is primarily for memory efficiency. Per-sample gradient computation has been discussed in the context of differential privacy (DP). Opacus \cite{opacus}, a popular Python library for training DNNs with DP, obtains the per-sample gradient based on module hooks. Module hooks are a mechanism in Pytorch designed for capturing individual modules' features, including input, output, and gradients. 

Although hooks allow us to compute the per-sample gradient effectively through vectorized computation, they are triggered by the calculation of parameter gradients. That is to say, before deriving the per-sample gradient, the average gradient is needlessly computed. To eliminate this redundancy, we propose to make use of Autograd to compute the gradient of modules' output, instead of modules' parameters. Then together with the reserved modules' input, we can derive the pre-sample gradient.

Considering a fully connected layer with input $\mb{X}\in \mathbb{R}^{I\times M}$ and weight $\mb{W}\in \mathbb{R}^{O\times I}$, we have the pre-activation output $\mb{Y}=\mb{W}\mb{X}\in \mathbb{R}^{O\times M}$, whose gradient is denoted as $\mb{Z}\in \mathbb{R}^{O\times M}$. With the reserved input $\mb{X}$ and the gradient $\mb{Z}$, we can derive the gradient of $\mb{W}$ for sample $m$ as follows:
\begin{equation}
    \mb{G}^m = \mb{Z}_{:,m}\mb{X}_{:,m}^\text{T},
\end{equation}
where $\mb{X}_{:,m}$ and $\mb{Z}_{:,m}$ are the $m$-th column of $\mb{X}$ and $\mb{Z}$, respectively. The vectorized form is $\text{vec}(\mb{G}^m)=\mb{Z}_{:,m}\otimes\mb{X}_{:,m}$. Therefore, we can represent $\mb{U}_l$ as the Khatri-Rao product of $\mb{Z}$ and $\mb{X}$ as follows:

\begin{equation}
    \mb{U}_l =  \mb{Z}\odot\mb{X}.
\end{equation}
For computing $\mb{U}_l^\text{T}\mb{U}_l$, we can make use of the identity $(\mb{A}\odot \mb{B})^\text{T}(\mb{A} \odot \mb{B})= \mb{A}^\text{T}\mb{A}*\mb{B}^\text{T}\mb{B}$ to get
\begin{equation}
    \mb{U}_l^\text{T}\mb{U}_l =\mb{Z}^\text{T}\mb{Z}*\mb{X}^\text{T} \mb{X},
    \label{trans}
\end{equation}
which can decrease the computational complexity of computing the correlation matrix from $\operatorname{O}(O^2I^2M+OIM)$ to $\operatorname{O}(O^2M+I^2M+M^2)$. It can significantly decrease the computation burden for wide fully connected layers.

For a conventional layer with padded input patches $\mathcal{X}\in \mathbb{R}^{IK^2\times S \times M}$ and weight $\mathbf{W}\in \mathbb{R}^{O\times IK^2}$, where $I$, $O$, $S$, $K$ denotes the sizes of the input channel, output channel, patches, and kernel size, respectively, we have the output $\mathcal{Y}\in \mathbb{R}^{O\times S\times M}$ with $\mathcal{Y}_{o,s,m}=\sum_{i=1}^{IK^2}\mathcal{X}_{i,s,m}\mb{W}_{o,i}$. We can derive the equation of $\mb{U}_l$ for convolutional layers as follows:
\begin{equation}
    \mb{U}_l = \sum_{s=1}^{S} \mathcal{Z}_{:,s,:} \odot \mathcal{X}_{:,s,:},
\end{equation}
where $\mathcal{Z}$ is the gradient of the output patches $\mathcal{Y}$. Due to the summation operator, we can't employ the identity equation to reconstruct $\mb{U}_l^\text{T}\mb{U}_l$ as \cref{trans}. Nonetheless, as the number of parameters in convolutional layers is much smaller than that of fully connected layers, the computation of $\mb{U}_l^\text{T}\mb{U}_l$ is typically affordable.

After deriving these coefficients, to mitigate the significant memory overhead associated with storing $\mb{U}_l$, we propose to apply weighted coefficients directly on inputs, and then we can derive the preconditioned gradient through the gradient computation tools in PyTorch, by using weighted inputs and output gradients. This approach also cancels out the time-consuming patch-extracting operator. Therefore, the additional computational complexity introduced by FNGD over SGD is primarily due to the input weighting step with $\operatorname{O}(MD)$ complexity as listed in \cref{tab:time}.

\subsection{Setting of Damping} \label{set_damping}
In \cref{coeffcient}, the addition of $\lambda \mb{I}$ to the low-rank matrix $\mb{U}_l^\text{T}\mb{U}_l$ serves to ensure the invertibility. Simultaneously, the term $\frac{1}{\lambda}$ is multiplied to scale the coefficients vector. In essence, the choice of the damping parameter $\lambda$ will markedly impact the performance of optimization.

Firstly, the value of $\lambda$ has an influence on the inverse $(\lambda \mb{I}+\mb{U}_l^\text{T}\mb{U}_l)^{-1}$. A small $\lambda$ may give rise to issues of numerical instability, whereas an excessively large $\lambda$ may lead to a degradation in the inverse precision. In order to appropriately determine $\lambda$, we establish a proportionality between $\lambda$ and the Frobenius norm of $\mathbf{U}_l^\text{T}\mathbf{U}_l$, which can be formulated as follows:
\begin{equation}
    \label{prop}
    \lambda = \alpha ||\mb{U}_l^\text{T}\mb{U}_l||_\text{F}.
\end{equation}

Moreover, for the ease of tuning $\lambda$, we incorporate the scaling factor $\frac{1}{\lambda}$ into the learning rate. Consequently, we only need to consider the impact of $\lambda$ on the remaining portion, \ie $\frac{1}{M}([1,1,\cdots,1]^\text{T}-(\lambda \mathbf{I}+\frac{1}{M}\mathbf{U}_l^\text{T}\mathbf{U}_l)^{-1}\mathbf{U}_l^\text{T}g_l)$. The tuning principle we employ is to choose $\alpha$ such that the remaining portion approximates $\frac{1}{M}$. Through this strategy, FNGD is akin to SGD but with fluctuating weighted coefficients. On the other hand, the step size is now changed from $\eta$ to $\frac{\eta}{\lambda}$. As the $\lambda$ is related to the second-order moment of gradients, we can view the step size as an adaptive learning rate, analogous to Adam. It may have the potential to speed up the convergence.

\section{Convergence}
We follow the framework established in \cite{zhang2019fast} to provide a theoretical convergence analysis. For simplicity, we consider the single-output case with squared error loss, while the multiple-output case can be extended similarly. Furthermore, we focus on the full-batch case.

Given a dataset $\{\mb{x}_m, y_m\}_{i=1}^{M}$ with $\mb{x}_m\in \mathbb{R}^{d} $ and $y_m\in \mathbb{R}$, we have 
the network output vector denoted as $\mb{v}(\mb w) = [\operatorname{f}(\mb w, \mb{x}_1), \cdots, \operatorname{f}(\mb w, \mb{x}_M))]^{\text{T}}$, and the squared error loss can be denoted as:
\begin{align*}
    \min_{\mb{w}} \frac{1}{2}\|\mb v(\mb{w})-\mb{y} \|^2_2,
\end{align*}
where $\mb{y}=[y_1, y_2, \cdots, y_M]^{\text{T}}$. We denote the Jacobian matrix as $\mb J(\mb w) = [\frac{\partial \operatorname{f}(\mb{w}, \mb{x}_1) }{\partial \mb w}, \cdots, \frac{\partial \operatorname{f}(\mb{w}, \mb{x}_M) }{\partial \mb w}]\in \mathbb{R}^{N\times M}$. Consequently, the averaged gradient vector $\mb{g}(\mb w)\in \mathbb{R}^{N}$ is given by $\frac{1}{M}\mb J (\mb w)(\mb{v}(\mb w)-\mb y)$.

In FNGD, the fisher information matrix is block-diagonal approximated, \ie $\mb F = \frac{1}{M}\Tilde{\mb{J}}\Tilde{\mb{J}}^{\text{T}}$, where $\Tilde{\mb{J}}\in\mathbb{R}^{N\times ML}$ is a variant of $\mb J$. This variant is defined as $\mb{J} = \Tilde{\mb{J}}\mb{K}$, with $\mb{K}=[\mb{I}_M, \mb{I}_M, \cdots, \mb{I}_M]^{\text{T}}\in\mathbb{R}^{ML\times M}$. Here, $L$ denotes the number of network layers.

We make two assumptions that suffice to prove the global convergence of FNGD. The first assumption is in line with the coefficient-sharing technique in FNGD. The second assumption requires that $\mb J$ is stable enough for small perturbations around the initialization, which guarantees that the network is close to a linearized network.

\begin{assumption}\label{as1}
    The Gram matrix $\mb G = \Tilde{\mb{J}}^{\text{T}}\Tilde{\mb{J}}$ is constant across iterations, and $\mb G$ is a positive definite matrix, with $0 <\lambda_{\min}\leq\operatorname{\lambda}(\mb G)\leq \lambda_{\max}< +\infty$.
\end{assumption}

\begin{assumption}\label{as2}
    For all parameters $\mb w$ that satisfy $\| \mb{w} - \mb{w}_0 \|_2 \leq \frac{\sqrt{\lambda_{\max}L}}{\lambda_{\min}}\|\mb y-\mb{v}(\mb{w}_0)\|_2 $, we have $\|\mb {J}(\mb{w})-\mb{J}(\mb{w}_0)\|_2 \leq \frac{\lambda_{\min}}{2\sqrt{\lambda_{\max}}}$.
\end{assumption}

\begin{theorem}\label{th1}
    Let Assumptions  \ref{as1} and \ref{as2} hold. Suppose we optimize with FNGD using a damping value $\lambda = \frac{\lambda_{\min}}{M}$ and a small enough learning rate $\eta \leq \Tilde{\eta}$, we have $\|\mb{v}_k - y\|_2^2 \leq (1-\eta)^k \|\mb{v}_0 - y\|_2^2$.
\end{theorem}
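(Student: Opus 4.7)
The plan is to adapt the linearization-based argument of \cite{zhang2019fast}: near initialization the network behaves like its linearization, so the residual $\mb{r}_k := \mb{v}(\mb{w}_k)-\mb{y}$ satisfies an (almost) linear recursion that contracts geometrically. Writing the FNGD step as $\mb{w}_{k+1} = \mb{w}_k - \eta(\mb{F}+\lambda\mb{I})^{-1}\mb{g}(\mb{w}_k)$ with $\mb{g}(\mb{w}_k) = \tfrac{1}{M}\mb{J}(\mb{w}_k)\mb{r}_k$, applying the mean-value theorem to $\mb{v}(\mb{w}_{k+1})-\mb{v}(\mb{w}_k)$ and substituting the update gives
\begin{equation*}
    \mb{r}_{k+1} = (\mb{I}-\eta\mb{P}_k)\mb{r}_k + \mb{e}_k,\quad \mb{P}_k := \tfrac{1}{M}\mb{K}^{\text{T}}\tilde{\mb{J}}_k^{\text{T}}(\mb{F}_k+\lambda\mb{I})^{-1}\tilde{\mb{J}}_k\mb{K},
\end{equation*}
where $\mb{e}_k$ collects the second-order remainder coming from the variation of $\mb{J}$ along the segment.

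The second step is to control the spectrum of $\mb{P}_k$. The push-through identity yields $\tilde{\mb{J}}^{\text{T}}(\tfrac{1}{M}\tilde{\mb{J}}\tilde{\mb{J}}^{\text{T}}+\lambda\mb{I})^{-1}\tilde{\mb{J}} = M(\mb{G}+M\lambda\mb{I})^{-1}\mb{G}$, which with $\lambda = \lambda_{\min}/M$ and \cref{as1} (which freezes $\mb{G}$ across iterations) collapses to $\mb{P}_k \equiv \mb{K}^{\text{T}}\mb{G}(\mb{G}+\lambda_{\min}\mb{I})^{-1}\mb{K}$. Since every eigenvalue of $\mb{G}$ is at least $\lambda_{\min}$, the factor $\mb{G}(\mb{G}+\lambda_{\min}\mb{I})^{-1}$ has spectrum contained in $[1/2,1]$, and because $\mb{K}^{\text{T}}\mb{K}=L\,\mb{I}_M$ the quadratic-form bound gives $\tfrac{L}{2}\mb{I}_M \preceq \mb{P}_k \preceq L\,\mb{I}_M$. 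Choosing $\eta$ small enough (of order $1/L$) therefore yields $\|\mb{I}-\eta\mb{P}_k\|_2^2 \leq 1-\eta$, i.e.\ the per-step contraction required by the theorem.

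The last step is to propagate this through an induction that also controls $\mb{e}_k$. The joint hypothesis at step $k$ is (i) $\|\mb{r}_k\|_2^2 \leq (1-\eta)^k\|\mb{r}_0\|_2^2$ and (ii) $\|\mb{w}_k-\mb{w}_0\|_2 \leq \tfrac{\sqrt{\lambda_{\max}L}}{\lambda_{\min}}\|\mb{r}_0\|_2$. From (i), the bound $\|(\mb{F}+\lambda\mb{I})^{-1}\mb{J}\|_2 \lesssim \sqrt{L/\lambda_{\min}}$, and the geometric series of $\sqrt{1-\eta}$, the accumulated drift $\sum_{j\leq k}\|\mb{w}_{j+1}-\mb{w}_j\|_2$ stays inside the Assumption~\ref{as2} ball, which propagates (ii). Combining (ii) with \cref{as2} gives $\|\mb{J}(\mb{w}_k)-\mb{J}(\mb{w}_0)\|_2 \leq \lambda_{\min}/(2\sqrt{\lambda_{\max}})$, which makes $\|\mb{e}_k\|_2 = o(\eta)\|\mb{r}_k\|_2$ and closes the induction. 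The main obstacle is exactly this coupling: the contraction needs $\mb{P}_k$ well-conditioned, which needs $\mb{w}_k$ to stay in the Assumption~\ref{as2} ball, which in turn needs the residual to have already contracted. Carefully tuning the threshold $\tilde{\eta}$ so that the accumulated drift respects the ball, and verifying that the Taylor remainder is strictly dominated by $\eta\|\mb{r}_k\|_2$, is the delicate bookkeeping that the proof turns on.
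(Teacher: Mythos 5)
Your proposal follows essentially the same route as the paper's proof: the same decomposition of $\mb{v}(\mb{w}_{k+1})-\mb{v}(\mb{w}_k)$ into a linearized term plus an integral remainder (the paper's $\mb{\Delta}_1$ and $\mb{\Delta}_2$), the same reduction of the preconditioned quadratic form to the spectrum of $(\mb G+\lambda M\mb I)^{-1}\mb G$ via the push-through/SMW identity, and the same induction keeping the iterates inside the Assumption~\ref{as2} ball. The only stylistic difference is that you package the linear part as an operator-norm contraction $\|\mb I-\eta\mb P_k\|_2^2\le 1-\eta$, whereas the paper expands $\|\mb{v}_{k+1}-\mb y\|_2^2$ and bounds the cross term and the squared increment separately; both are workable.

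There is, however, one genuine quantitative gap. You assert that the Taylor remainder satisfies $\|\mb e_k\|_2=o(\eta)\|\mb r_k\|_2$. It does not: Assumption~\ref{as2} bounds the Jacobian drift by the fixed constant $\lambda_{\min}/(2\sqrt{\lambda_{\max}})$, independent of $\eta$, so the remainder is $\Theta(\eta)\|\mb r_k\|_2$ --- with $\lambda M=\lambda_{\min}$ the paper's bound is exactly $\|\mb{\Delta}_2\|_2\le\eta\sqrt{L/2}\,\|\mb r_k\|_2$. The remainder therefore competes at the \emph{same} order in $\eta$ as the contraction $\eta\,\Theta(L)$ coming from $\mb P_k\succeq\tfrac{L}{2}\mb I_M$, and the argument only closes because the constant $\sqrt{L/2}$ is beaten by the constant $L/2$; concretely the paper needs $\operatorname{f}(\lambda_{\min})=L-\sqrt{2L}-1\ge 0$, i.e.\ $L\ge 4$, a condition your sketch never surfaces. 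If the remainder were genuinely $o(\eta)$, no condition on $L$ would be needed and the specific damping choice $\lambda=\lambda_{\min}/M$ would play no essential role; as it stands, the $L$-versus-$\sqrt{L}$ competition is precisely the ``delicate bookkeeping'' you defer to at the end, and it cannot be dispatched by a small-$\eta$ limit.
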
 

The detailed proof can be found in the Appendix. Although the optimization landscape is non-convex, FNGD with a constant learning rate and damping value enjoys at least a linear convergence rate. With a larger minimum eigenvalue $\lambda_{min}$, it is feasible to employ a larger learning rate so that FNGD can converge within fewer iterations.
\section{Experiments}
In this section, we compare FNGD with prevailing first-order methods, such as SGD and AdamW \cite{loshchilov2017decoupled}, as well as second-order methods, like KFAC, shampoo, and Eva. We examine their performance on the following two tasks: image classification and machine translation. Each algorithm was executed with the optimal hyperparameters determined through a grid search. In the case of KFAC and Shampoo, we set the frequency for updating second-order statistics to $T_1=10$ and the frequency for inverting to $T_2=100$. For Eva, we follow the setting described in \cite{zhang2023eva} to update the second-order statistics at each iteration. 

For all the algorithms mentioned above, we only utilize second-order statistics to precondition the gradient of convolutional layers and fully connected layers. For other trainable layers including BatchNorm layers, LayerNorm layers, and Embedding layers, we directly adopt the gradient descent direction. When implementing the KFAC algorithm, we follow the suggestion in \cite{pauloski2020convolutional, grosse2016kronecker} that employs eigenvalue decomposition on Kronecker factors to compute the inverse, which has been shown to yield higher test accuracy compared to directly inverting. For the implementation of Shampoo and Eva, we use the publicly available code\footnote{\href{https://github.com/lzhangbv/eva}{https://github.com/lzhangbv/eva}}. Our experiments were run on GeForce RTX 3080 GPUs using PyTorch.

\subsection{Image Classification}
We first evaluate our method's effectiveness and time efficiency on image classification tasks. To evaluate our method across Convolutional Neural Networks (CNNs) with varying widths and depths, we run experiments using five models: ResNet-32 and VGG-11 on CIFAR-10 dataset, ResNet-18 and ResNet-34 on CIFAR-100 dataset, and ResNet-50 on large-scale ImageNet dataset. In our experiments of CIFAR-10 and CIFAR-100, the first-order method SGD with momentum 0.9 (SGD-m) was run for 200 epochs, while second-order methods were run for 100 epochs. We decay the learning rate by a factor of 0.1 at 50\% and 75\% of the training epochs. Regarding ImageNet, the number of epochs was set to 90 for SGD-m and 55 for second-order methods. We set a batch size of 128 for all algorithms. 

We present the optimization curves of FNGD and the other mentioned algorithms for CIFAR-10 and CIFAR-100 in \cref{cifar10} and \cref{cifar100}, respectively. One can see that, for the four image classification tasks, FNGD can achieve comparable convergence and generalization performance when compared to other second-order methods. In terms of time efficiency, FNGD outperforms the other algorithms. It is noticeable that in the majority of cases, KFAC tends to be slower than SGD-m. In contrast, FNGD achieves comparable or superior accuracy to SGD-m in merely half the time.

This time efficiency contributes to the efficient training of FNGD. In \cref{tab:statistical}, we take the ResNet-32 on CIFAR-10 as an example to demonstrate the optimization advantage of FNGD. As indicated in \cref{tab:statistical}, FNGD achieves a 32.6\% reduction in training time compared to SGD-m to attain the target accuracy of 92.5\%. In contrast, Eva and Shampoo exhibit increase in training time of 17.3\% and 1.7\%, respectively.

\begin{figure}[htbp]
     \subfloat[VGG-11]{
    \centering
    \includegraphics[scale=0.3]{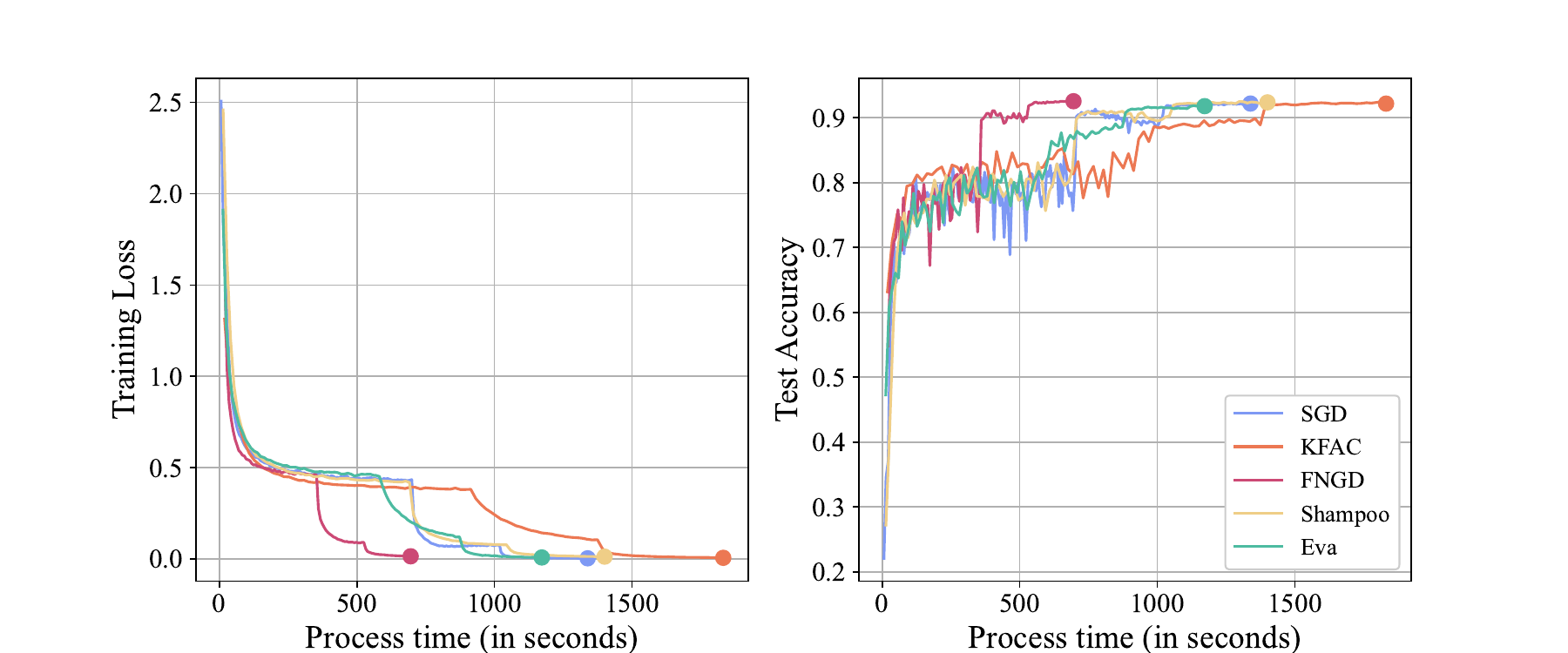}
    }
    
    \subfloat[ResNet-32]{
    \centering
    \includegraphics[scale=0.3]{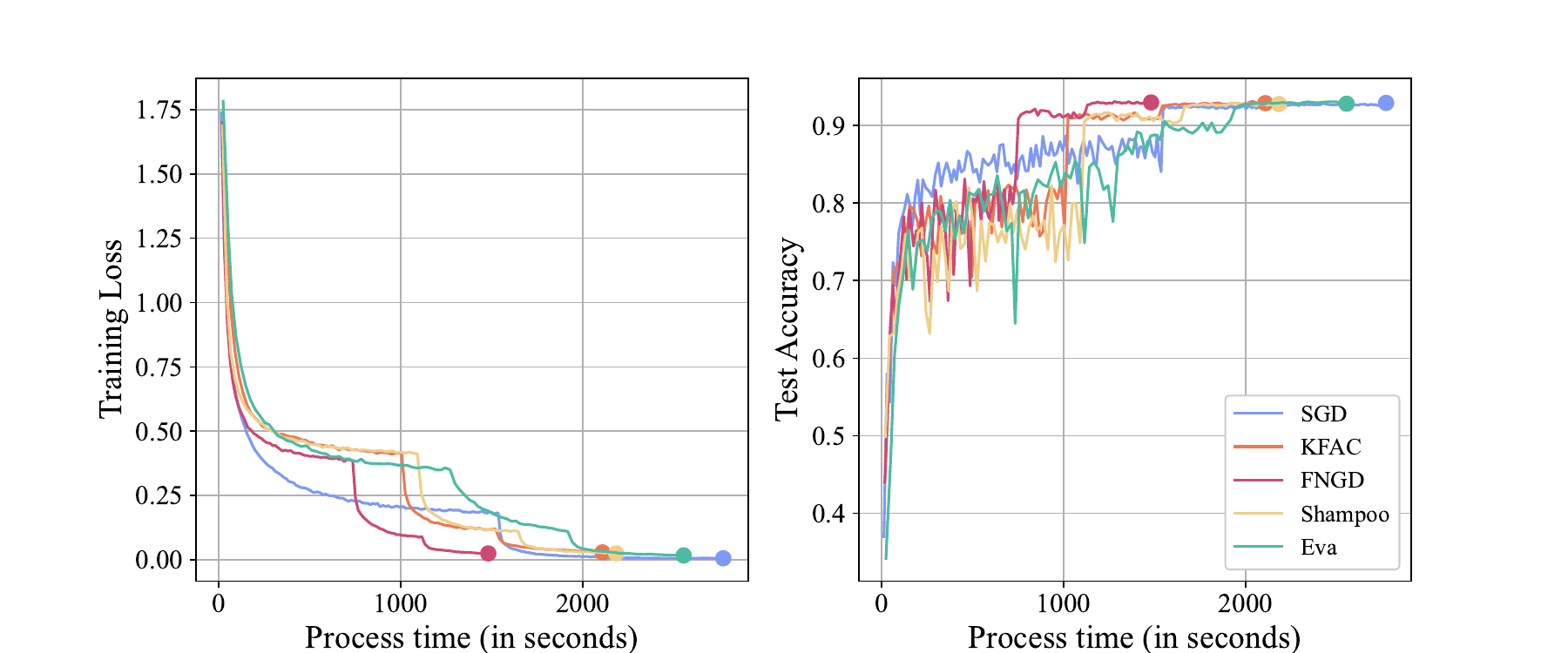}
    }
    
    \caption{The optimization curves of FNGD, SGD-m, KFAC, Shampoo, and Eva on ResNet-32 and VGG-11 with the CIFAR-10 dataset.}
    \label{cifar10}
\end{figure}

\begin{figure}[htbp]
    \subfloat[ResNet-18]{
    \centering
    \includegraphics[scale=0.3]{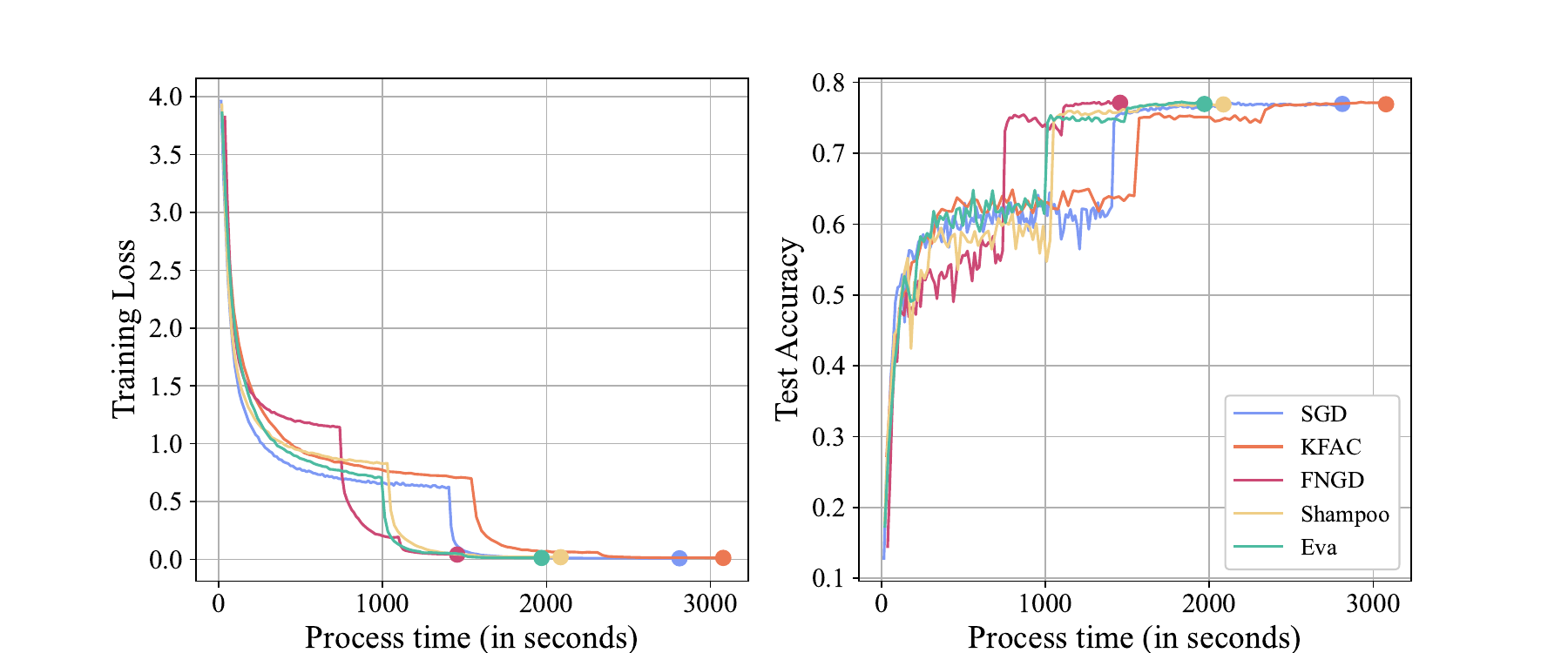}
    }
    
    \subfloat[ResNet-34]{
    \centering
    \includegraphics[scale=0.3]{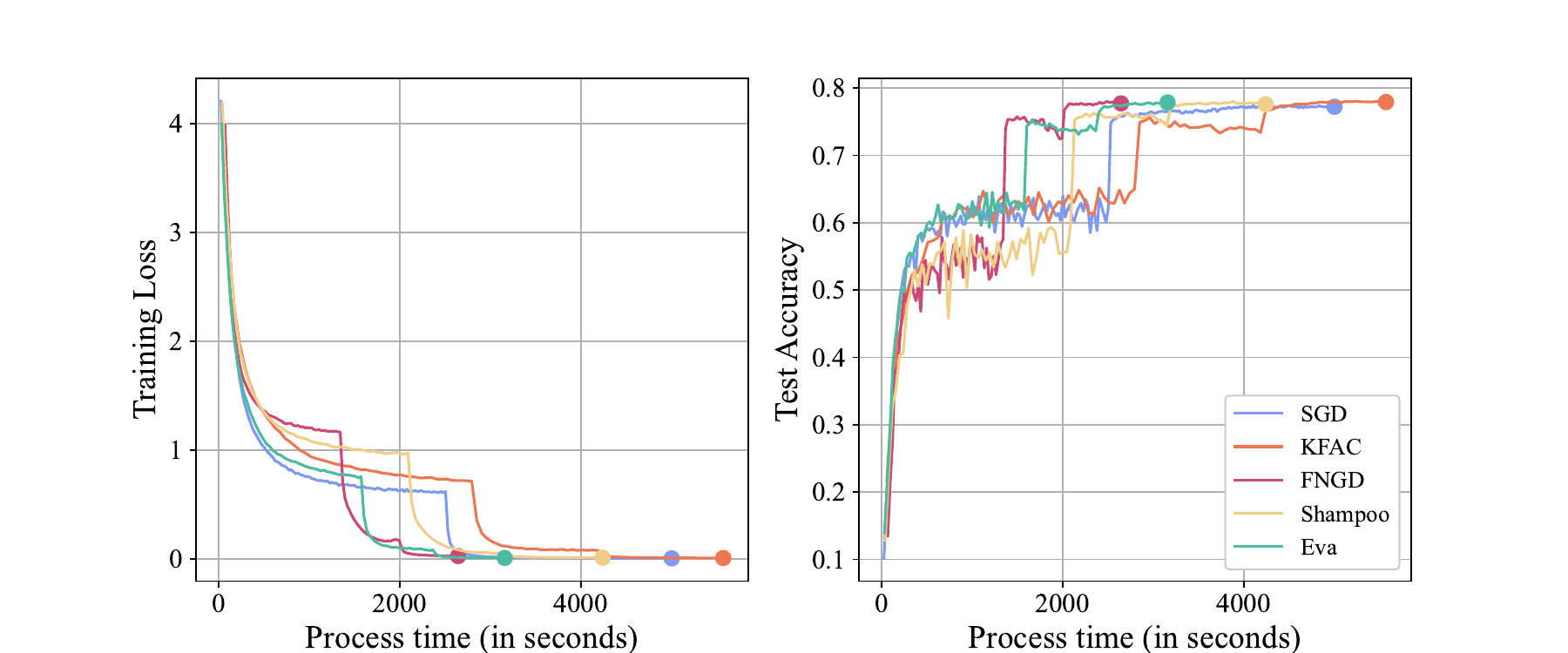}
    }
    \caption{The optimization curves of FNGD, SGD-m, KFAC, Shampoo, and Eva on ResNet-18 and ResNet-34 with the CIFAR-100 dataset.}
    \label{cifar100}
\end{figure}

\begin{table}[htbp]
\caption{Comparison of FNGD and other methods in terms of running time when reaching the target accuracy of 92.5\% for ResNet-32 on CIFAR-10.}
\resizebox{0.5\textwidth}{!}{
\begin{tabular}{c|c|c|c|c|c}
\hline
Method    & SGD-m & KFAC    & Shampoo  & Eva &FNGD    \\
\hline
Epoch     & 112   & 75      & 77        & 76 & 75      \\
Time (s) & 1677  & 1542    & 1706    & 1967 & 1131    \\
Time Gap  & 0\%   & -8.1\% & +1.7\%  & +17.3\% &\textbf{-32.6\%} \\
\hline
\end{tabular}}
\label{tab:statistical}
\end{table}
\begin{table}[]
        \centering
        \caption{Comparison of per-epoch training time between FNGD and other algorithms.}
        \resizebox{0.5\textwidth}{!}{
        \begin{tabular}{c|c|c|c|c|c|c}
        \hline
        Dataset & Model & SGD-m & KFAC & Shampoo & Eva & FNGD \\
        \hline
        \multirow{2}{*}{CIFAR-10} & VGG-11 & 1$\times$&2.73$\times$&2.09$\times$&1.75$\times$&\textbf{1.03$\times$}\\ &ResNet-32&1$\times$&1.52$\times$&1.57$\times$&1.84$\times$&\textbf{1.06$\times$}\\
        \hline
        \multirow{2}{*}{CIFAR-100} & ResNet-18 &1$\times$&2.18$\times$&1.48$\times$&1.40$\times$&\textbf{1.03$\times$} \\
    &ResNet-34&1$\times$&2.22$\times$&1.69$\times$&1.26$\times$&\textbf{1.05$\times$}\\
        \hline
        \end{tabular}
        \label{time efficiency}}
    \end{table}

We give a detailed time efficiency comparison of FNGD and other algorithms in \cref{time efficiency}. It is evident that FNGD exhibits the shortest per-epoch training time among all the evaluated second-order methods, nearly matching the performance of SGD-m. On average, the per-epoch training time of FNGD is 1.04$\times$ longer than that of SGD. When compared to KFAC, Shampoo, and Eva, FNGD can achieve speedup factors of up to 2.07$\times$, 1.64$\times$, and 1.50$\times$, respectively. Note that the relative time cost of Eva is higher than what is reported in \cite{zhang2023eva}. This is because the batch size we utilize is much smaller than the 1024 mentioned in \cite{zhang2023eva}, which results in more iterations per epoch. Consequently, there will be more statistical information computations and preconditioning operators.

For the larger-scale dataset ImageNet, we evaluate FNGD's performance on ResNet50 as shown in \cref{fig:imagenet}. It is clear that the training time of FNGD is nearly half of that of SGD, and FNGD is the fastest algorithm to achieve the target accuracy of 75\%. For the distributed implementation of FNGD, we can first perform the weighted sum of per-sample gradients on each server. Subsequently,  these distributed preconditioned gradients will be aggregated. In this way, there are no additional communication costs introduced by FNGD.
\begin{figure}
    \centering
    \includegraphics[scale=0.3]{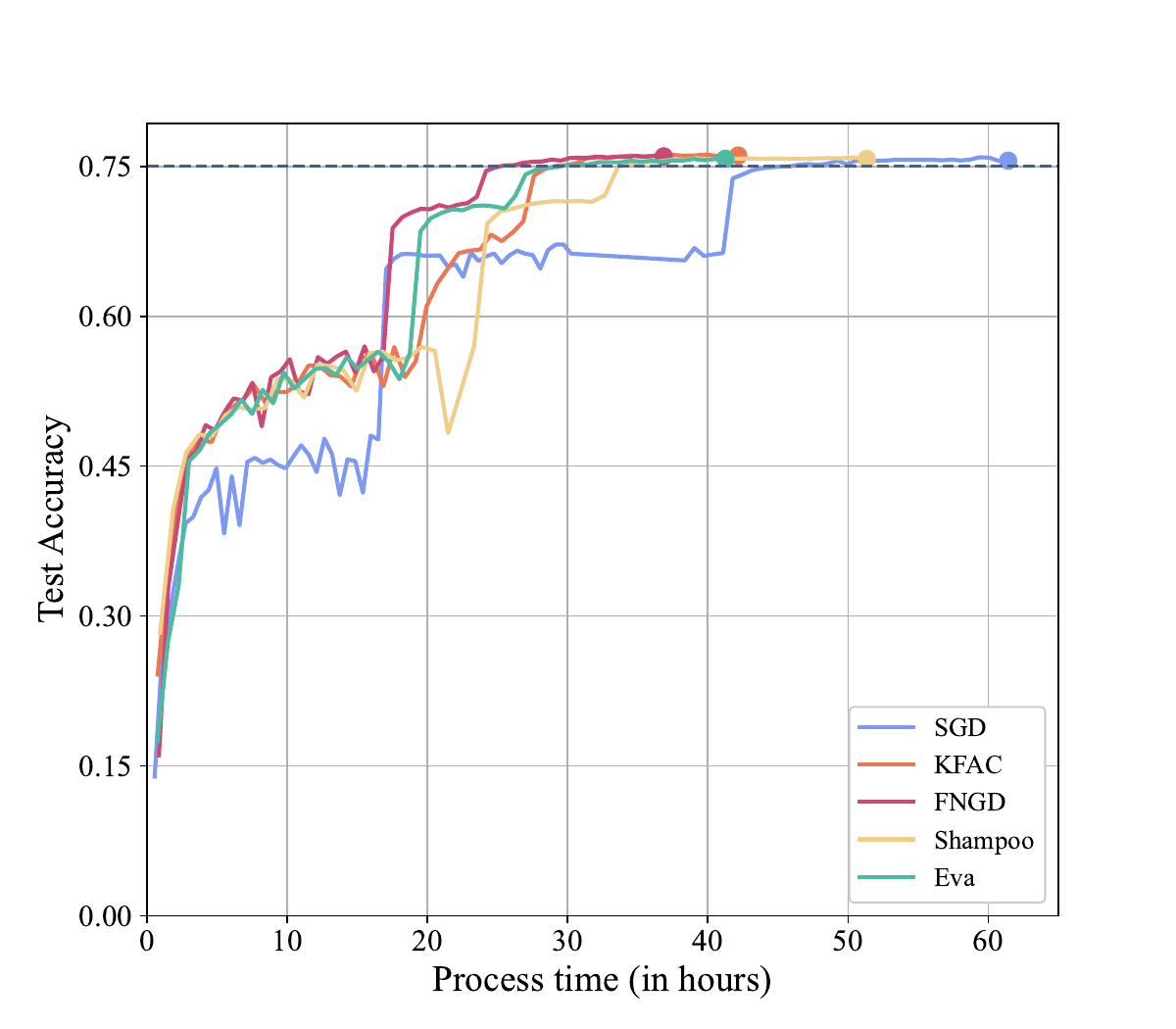}
    \caption{Generalization performance for ResNet-50 on ImageNet.}
    \label{fig:imagenet}
\end{figure}


\subsection{Machine Translation}
In the context of machine translation, we examine the efficiency of FNGD using the Transformer model with the Multi30K dataset. The Multi30K comprises image descriptions in both English and German. We adopt the conventional Transformer architecture described in \cite{vaswani2017attention}. Each block in the Transformer is configured with a model dimension of 512, a hidden dimension of 2048, and 8 attention heads. We utilize the metric BLEU to evaluate the quality of machine translation.

For natural language processing tasks, SGD performs much worse than AdamW as demonstrated in \cite{yao2021adahessian}. Therefore, we included comparative experiments with AdamW. We didn't include Eva in our experiments as its effectiveness for the Transformer has not been confirmed in \cite{zhang2023eva}. We run all the algorithms for 100 epochs with a batch size of 64. 

Our results are shown in \cref{fig:transformer}. It is demonstrated that FNGD yields the highest BLEU score on the test dataset. Specifically, the BLEU score achieved by FNGD surpasses that of SGD-m by nearly 35 points, and FNGD outperforms AdamW by 24 points, Shampoo by 11 points, and KFAC by 4 points. Furthermore, FNGD outperforms these second-order methods in terms of end-to-end training time. FNGD can achieve a similar time cost as both SGD-m and AdamW. In comparison to KFAC, FNGD is approximately 2.4$\times$ faster. Moreover, there is a remarkable time gap between Shampoo and FNGD, which differs from the situation with CNN models. This is attributed to the fact that the dimensions of layers in the Transformer are no less than 512, which significantly increases the computational complexity associated with inverting.


\begin{figure}
    \centering
    \includegraphics[scale=0.3]{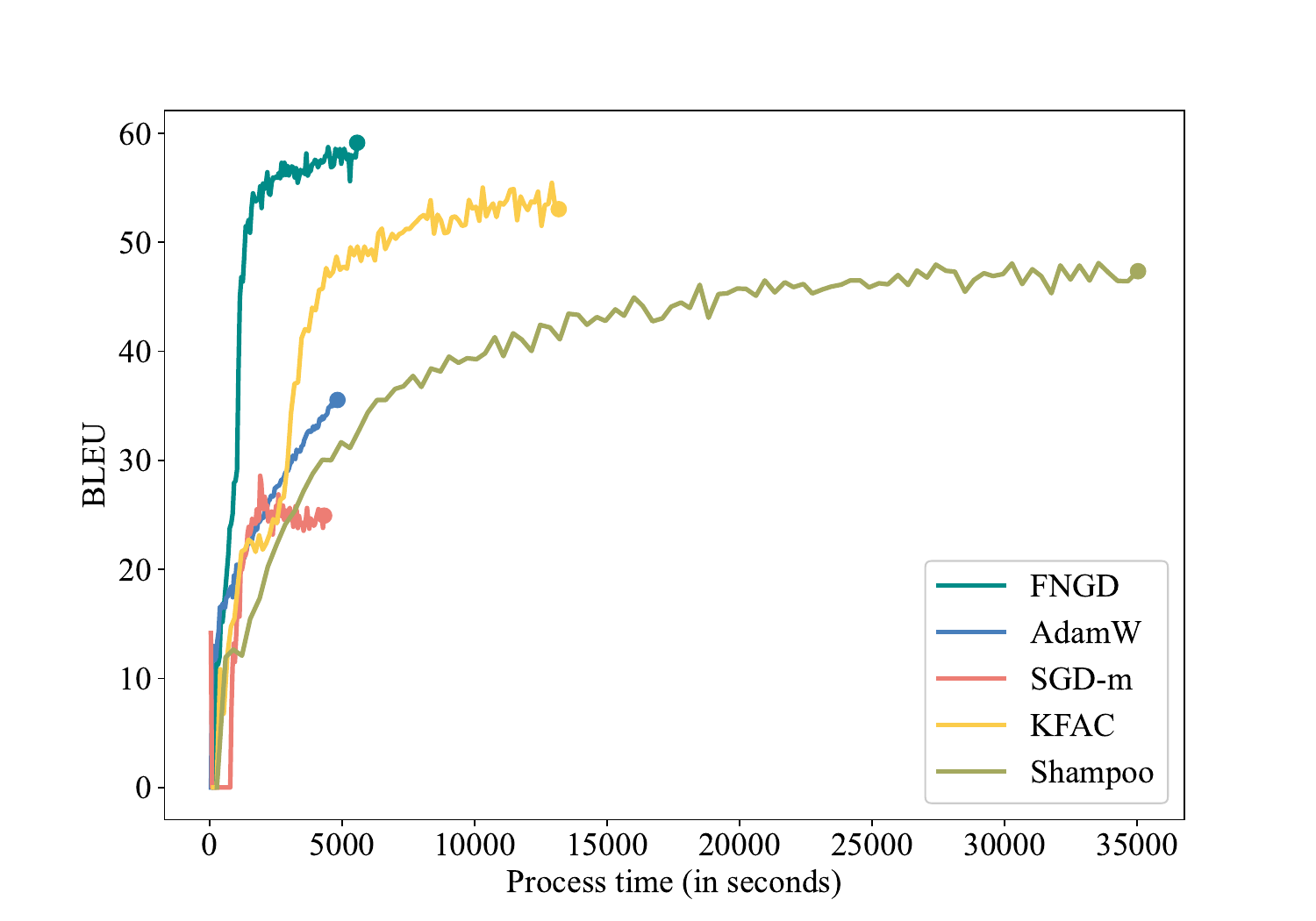}
    \caption{Test BLEU scores of Transformer on Multi30K using FNGD, AdamW, KFAC, and Shampoo.}
    \label{fig:transformer}
\end{figure}

\subsection{Time Analysis}
In order to have a thorough understanding of the time cost of FNGD, we provide a detailed analysis of the time cost of each step within FNGD in \cref{time_analysis}. We present the analysis on two different types of networks, \ie, ResNet and Transformer. As in SGD-m, the training process involves three primary steps: forward pass, backward pass, and parameter update. The FNGD and AdamW have an additional preconditioning step. 

In \cref{time_analysis}, we can see that the backward time in FNGD is less than the standard time cost in SGD-m. This results from our strategy of efficiently computing per-sample gradients. As mentioned in \cref{pre-sample graident}, we propose to make use of Autograd to compute the gradient of modules' output. It can be seen as a substep of the SGD-m backward process.

The total time spent on the backward and preconditioning processes of FNGD is comparable to the backward time of SGD-m, resulting in the overall computational complexity of FNGD approaches that of SGD-m. For the wider Transformer structure, its total time slightly exceeds that of the standard backward, as the computational cost of the preconditioning step is related to the layer dimension.

\begin{figure}[htbp!]
    \subfloat[ResNet-32]{
    \centering
    \includegraphics[scale=0.19]{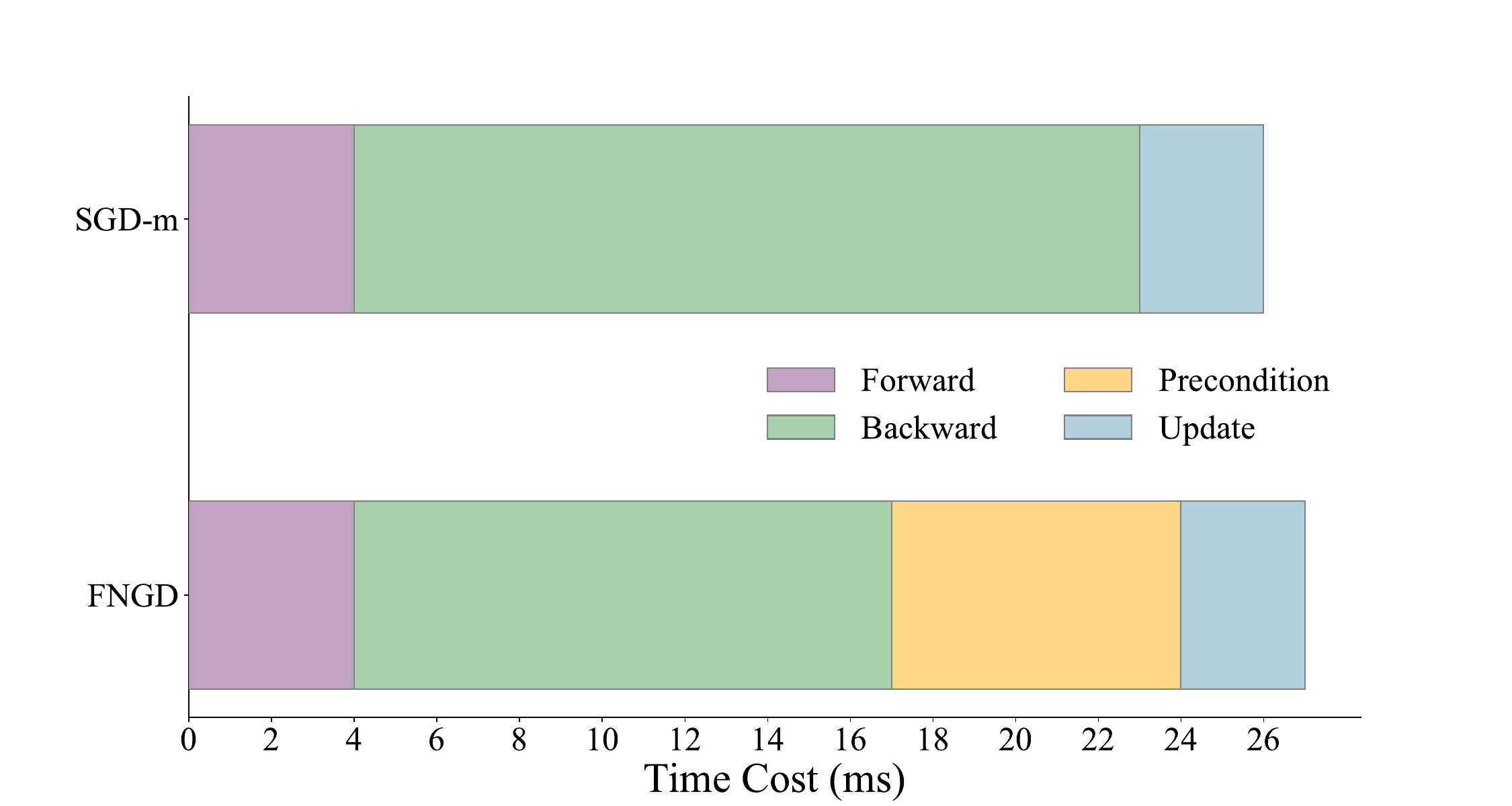}
    }
    
    \subfloat[Transformer]{
    \centering
    \includegraphics[scale=0.19]{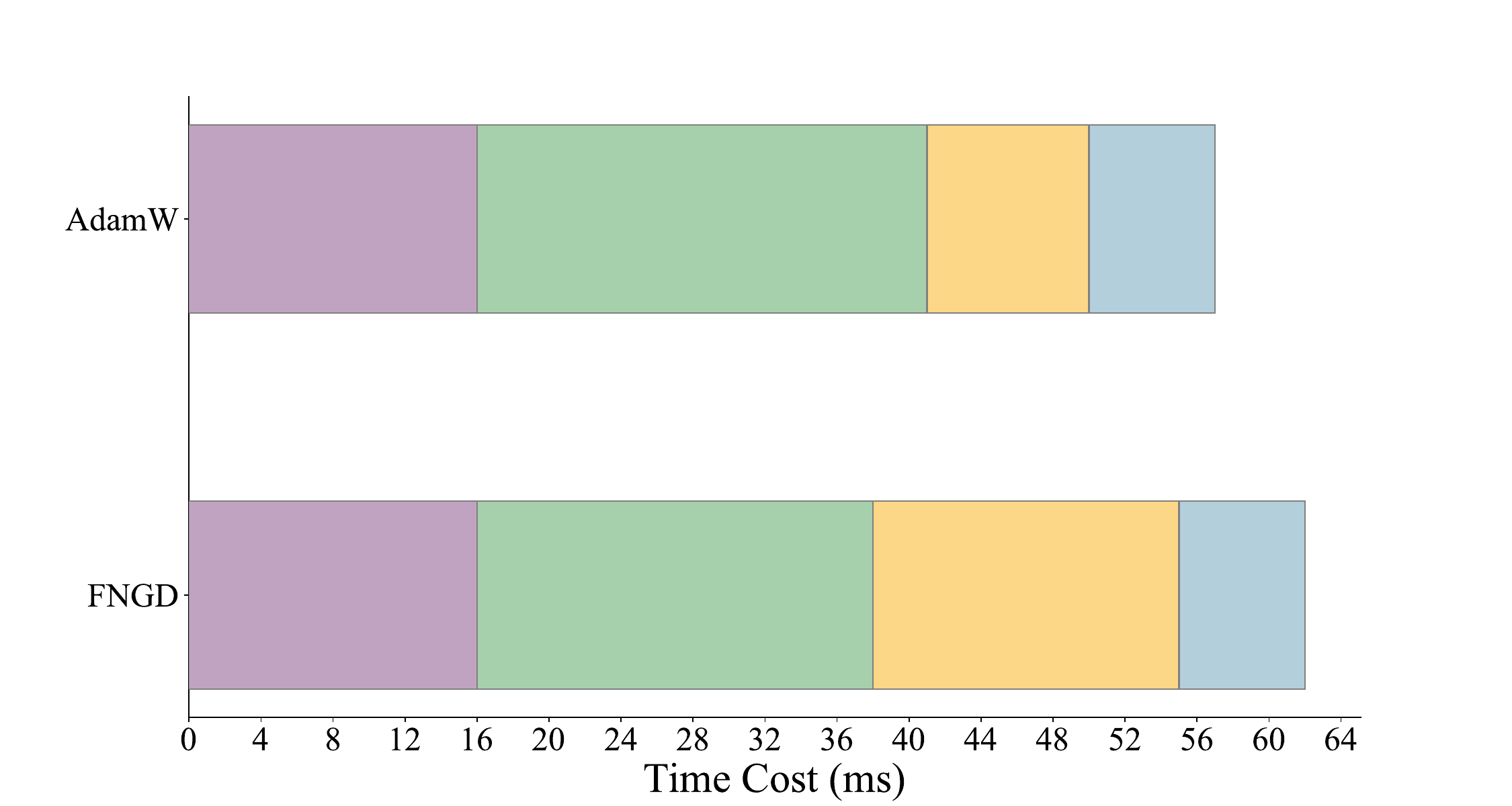}
    }
    
    \caption{Time Analysis of FNGD and first-order methods on two types of model structures.}
    \label{time_analysis}
\end{figure}

\subsection{Ablation Study}
We perform an ablation study to validate the necessity of the three key components in FNGD: coefficient-sharing, efficient precondition for acceleration, and damping determination. Without employing damping determination, we assigned a constant damping value of 0.3 to all layers in this experiment. Without applying efficient precondition, we start by performing the standard backward pass, leveraging hooks to derive the per-sample gradients, and subsequently compute the matrix-vector multiplication. The results are demonstrated in \cref{tab:ablation}.

It is demonstrated that any of the three key components is crucial for FNGD. Specifically, coefficient-sharing can reduce the training time by half with a small increase in accuracy. By employing Autograd on module output and performing efficient weighed-sum, a slight speedup can be achieved compared with using hooks. Without the damping determination strategy described in \cref{set_damping}, there will be an accuracy degeneration, particularly in the case of ResNet-18.
\begin{table}[htbp!]
    \caption{Ablation study on FNGD. We train ResNet-110 on Cifar-10, and ResNet-18 on CIFAR-100. The first row represents test accuracy, and the second row represents the relative per-epoch training time.}
    \centering
    \resizebox{0.45\textwidth}{!}{
    \begin{tabular}{c|c|c|c|c}
    \hline
    Network & FNGD& w/o sharing & w/o acceleration & w/o damping \\
    \hline
         \multirow{2}{*}{ResNet-32}& 93.10\%& 92.75\%& -& 92.63\% \\
    \cline{2-5}
          & 1.0$\times$ & 2.0$\times$ &1.3$\times$ &  -\\
    \hline
        \multirow{2}{*}{ResNet-18} & 77.37\%& 77.06\%& -& 75.51\%  \\
     \cline{2-5}
            &1.0$\times$ & 2.1$\times$ & 1.2$\times$& - \\
    \hline
    \end{tabular}}
    
    \label{tab:ablation}
\end{table}
\subsection{Hyper-parameter}

We explore the impact of different hyperparameter settings on the generalization performance of FNGD in comparison with KFAC. Our particular focus lies on the damping value. With the damping determination strategy, the damping value tuned in FNGD is referred to as the $\alpha$ in \cref{prop}. We perform the experiment by training ResNet-18 on CIFAR-100. For fairness, the other hyperparameters are fixed for both FNGD and KFAC.

We represent the accuracy of FNGD and KFAC with different damping values in \cref{tab:damping}. Specifically, by varying the damping value from 0.01 to 0.001, the mean accuracy of FNGD reaches 77.22\%, while KFAC exhibits a lower mean value of 77.00\%. Moreover, the highest accuracy 77.56\% of FNGD is much higher than that of KFAC, 77.22\%. In general, FNGD is less likely to require tedious fine-tuning to achieve high accuracy.
\begin{table}[htbp!]
    \caption{Hyper-parameter study on damping value by training ResNet-18 on CIFAR-100.}
    \centering
    \resizebox{0.5\textwidth}{!}{
    \begin{tabular}{c|c|c|c|c|c}
    \hline
        Method & 0.01 & 0.008 & 0.005 & 0.003 & 0.001 \\
    \hline
         FNGD& 77.24\% & 77.09\% & 77.56\% & 77.34\% & 76.86\%\\
    \hline
         KFAC& 77.22\% & 76.96\%& 77.00\% & 76.71\% & 77.10\%\\
    \hline
    \end{tabular}}
    
    \label{tab:damping}
\end{table}

\section{Conclusion}
We presented a fast natural gradient descent (FNGD) method, which is computationally efficient for deep learning. We first proposed to reformulate the gradient preconditioning formula in the NGD as a weighted sum of per-sample gradients using the SMW formula. Furthermore, these weighted coefficients are shared across epochs without affecting empirical performance. As a result, the inverse operator involved in computing coefficients only needs to be performed during the first epoch, and the computational complexity of FNGD approaches that of first-order methods. Extensive experiments on training DNNs are conducted to demonstrate that our method can outperform widely used second-order methods in terms of per-epoch training time while achieving competitive convergence and generalization performance.
\ifCLASSOPTIONcaptionsoff
  \newpage
\fi



%


{
\small
\bibliographystyle{ieee_fullname}
\bibliography{ref}

\begin{thebibliography}{10}\itemsep=-1pt

\bibitem{amari1998natural}
Shun-Ichi Amari.
\newblock Natural gradient works efficiently in learning.
\newblock {\em Neural computation}, 10(2):251--276, 1998.

\bibitem{bahamou2022mini}
Achraf Bahamou, Donald Goldfarb, and Yi Ren.
\newblock A mini-block fisher method for deep neural networks.
\newblock {\em arXiv preprint arXiv:2202.04124}, 2022.

\bibitem{battiti1992first}
Roberto Battiti.
\newblock First-and second-order methods for learning: between steepest descent
  and newton's method.
\newblock {\em Neural computation}, 4(2):141--166, 1992.

\bibitem{botev2017practical}
Aleksandar Botev, Hippolyt Ritter, and David Barber.
\newblock Practical gauss-newton optimisation for deep learning.
\newblock In {\em International Conference on Machine Learning}, pages
  557--565. PMLR, 2017.

\bibitem{duchi2011adaptive}
John Duchi, Elad Hazan, and Yoram Singer.
\newblock Adaptive subgradient methods for online learning and stochastic
  optimization.
\newblock {\em Journal of machine learning research}, 12(7):2121--2159, 2011.

\bibitem{george2018fast}
Thomas George, C{\'e}sar Laurent, Xavier Bouthillier, Nicolas Ballas, and
  Pascal Vincent.
\newblock Fast approximate natural gradient descent in a kronecker factored
  eigenbasis.
\newblock {\em Advances in Neural Information Processing Systems}, 31, 2018.

\bibitem{goodfellow2016deep}
Ian Goodfellow, Yoshua Bengio, and Aaron Courville.
\newblock {\em Deep learning}.
\newblock MIT press, 2016.

\bibitem{grosse2016kronecker}
Roger Grosse and James Martens.
\newblock A kronecker-factored approximate fisher matrix for convolution
  layers.
\newblock In {\em International Conference on Machine Learning}, pages
  573--582. PMLR, 2016.

\bibitem{gupta2018shampoo}
Vineet Gupta, Tomer Koren, and Yoram Singer.
\newblock Shampoo: Preconditioned stochastic tensor optimization.
\newblock In {\em International Conference on Machine Learning}, pages
  1842--1850. PMLR, 2018.

\bibitem{hager1989updating}
William~W Hager.
\newblock Updating the inverse of a matrix.
\newblock {\em SIAM review}, 31(2):221--239, 1989.

\bibitem{he2016deep}
Kaiming He, Xiangyu Zhang, Shaoqing Ren, and Jian Sun.
\newblock Deep residual learning for image recognition.
\newblock In {\em Proceedings of the IEEE conference on computer vision and
  pattern recognition}, pages 770--778, 2016.

\bibitem{hinton2012neural}
Geoffrey Hinton, Nitish Srivastava, and Kevin Swersky.
\newblock Neural networks for machine learning lecture 6a overview of
  mini-batch gradient descent.
\newblock {\em Cited on}, 14(8):2, 2012.

\bibitem{kingma2014adam}
Diederik~P Kingma and Jimmy Ba.
\newblock Adam: A method for stochastic optimization.
\newblock {\em arXiv preprint arXiv:1412.6980}, 2014.

\bibitem{lin2017focal}
Tsung-Yi Lin, Priya Goyal, Ross Girshick, Kaiming He, and Piotr Doll{\'a}r.
\newblock Focal loss for dense object detection.
\newblock In {\em Proceedings of the IEEE international conference on computer
  vision}, pages 2980--2988, 2017.

\bibitem{loshchilov2017decoupled}
Ilya Loshchilov and Frank Hutter.
\newblock Decoupled weight decay regularization.
\newblock {\em arXiv preprint arXiv:1711.05101}, 2017.

\bibitem{ly2017tutorial}
Alexander Ly, Maarten Marsman, Josine Verhagen, Raoul~PPP Grasman, and Eric-Jan
  Wagenmakers.
\newblock A tutorial on fisher information.
\newblock {\em Journal of Mathematical Psychology}, 80:40--55, 2017.

\bibitem{martens2015optimizing}
James Martens and Roger Grosse.
\newblock Optimizing neural networks with kronecker-factored approximate
  curvature.
\newblock In {\em International conference on machine learning}, pages
  2408--2417. PMLR, 2015.

\bibitem{more2006levenberg}
Jorge~J Mor{\'e}.
\newblock The levenberg-marquardt algorithm: implementation and theory.
\newblock In {\em Numerical analysis: proceedings of the biennial Conference
  held at Dundee, June 28--July 1, 1977}, pages 105--116. Springer, 2006.

\bibitem{mu2022hylo}
Baorun Mu, Saeed Soori, Bugra Can, Mert G{\"u}rb{\"u}zbalaban, and Maryam~Mehri
  Dehnavi.
\newblock Hylo: a hybrid low-rank natural gradient descent method.
\newblock In {\em SC22: International Conference for High Performance
  Computing, Networking, Storage and Analysis}, pages 1--16. IEEE, 2022.

\bibitem{nocedal1999numerical}
Jorge Nocedal and Stephen~J Wright.
\newblock {\em Numerical optimization}.
\newblock Springer, 1999.

\bibitem{pauloski2020convolutional}
J~Gregory Pauloski, Zhao Zhang, Lei Huang, Weijia Xu, and Ian~T Foster.
\newblock Convolutional neural network training with distributed k-fac.
\newblock In {\em SC20: International Conference for High Performance
  Computing, Networking, Storage and Analysis}, pages 1--12. IEEE, 2020.

\bibitem{ren2019efficient}
Yi Ren and Donald Goldfarb.
\newblock Efficient subsampled gauss-newton and natural gradient methods for
  training neural networks.
\newblock {\em arXiv preprint arXiv:1906.02353}, 2019.

\bibitem{robbins1951stochastic}
Herbert Robbins and Sutton Monro.
\newblock A stochastic approximation method.
\newblock {\em The annals of mathematical statistics}, pages 400--407, 1951.

\bibitem{tang2021skfac}
Zedong Tang, Fenlong Jiang, Maoguo Gong, Hao Li, Yue Wu, Fan Yu, Zidong Wang,
  and Min Wang.
\newblock Skfac: Training neural networks with faster kronecker-factored
  approximate curvature.
\newblock In {\em Proceedings of the IEEE/CVF Conference on Computer Vision and
  Pattern Recognition}, pages 13479--13487, 2021.

\bibitem{vaswani2017attention}
Ashish Vaswani, Noam Shazeer, Niki Parmar, Jakob Uszkoreit, Llion Jones,
  Aidan~N Gomez, {\L}ukasz Kaiser, and Illia Polosukhin.
\newblock Attention is all you need.
\newblock {\em Advances in neural information processing systems}, 30, 2017.

\bibitem{yao2021adahessian}
Zhewei Yao, Amir Gholami, Sheng Shen, Mustafa Mustafa, Kurt Keutzer, and
  Michael Mahoney.
\newblock Adahessian: An adaptive second order optimizer for machine learning.
\newblock In {\em proceedings of the AAAI conference on artificial
  intelligence}, volume~35, pages 10665--10673, 2021.

\bibitem{opacus}
Ashkan Yousefpour, Igor Shilov, Alexandre Sablayrolles, Davide Testuggine,
  Karthik Prasad, Mani Malek, John Nguyen, Sayan Ghosh, Akash Bharadwaj,
  Jessica Zhao, Graham Cormode, and Ilya Mironov.
\newblock Opacus: {U}ser-friendly differential privacy library in {PyTorch}.
\newblock {\em arXiv preprint arXiv:2109.12298}, 2021.

\bibitem{zadrozny2003cost}
Bianca Zadrozny, John Langford, and Naoki Abe.
\newblock Cost-sensitive learning by cost-proportionate example weighting.
\newblock In {\em Third IEEE international conference on data mining}, pages
  435--442. IEEE, 2003.

\bibitem{zhang2019fast}
Guodong Zhang, James Martens, and Roger~B Grosse.
\newblock Fast convergence of natural gradient descent for over-parameterized
  neural networks.
\newblock {\em Advances in Neural Information Processing Systems}, 32, 2019.

\bibitem{zhang2023eva}
Lin Zhang, Shaohuai Shi, and Bo Li.
\newblock Eva: A general vectorized approximation framework for second-order
  optimization.
\newblock {\em arXiv preprint arXiv:2308.02123}, 2023.

\end{thebibliography}
}
\clearpage
\onecolumn
\renewcommand\thesection{\Alph{section}}
\setcounter{table}{0}   
\setcounter{figure}{0}
\setcounter{section}{0}
\setcounter{equation}{0}
\setcounter{theorem}{0}

\begin{appendices}
\section{Proof of Theorem \ref{th1}}
We follow the framework established in \cite{zhang2019fast} to provide a theoretical convergence analysis. For simplicity, we consider the single-output case with squared error loss and the full-batch case.

First of all, with $0 \leq s \leq 1$, we define a linear interpolation between $\mb{w}_k$ and $\mb{w}_{k+1}$:
\begin{align*}
    \mb{w}_{k}(s) &= s\mb{w}_{k+1} + (1-s)\mb{w}_k \\
    &= \mb{w}_k - s\frac{\eta}{M}(\mb{F}_k+\lambda \mb{I})^{-1}\mb{J}_k(\mb{v}(\mb{w}_{k})-\mb{y})
\end{align*}
Then, we have:
\begin{equation}
\begin{aligned}
    \mb{v}(\mb{w}_{k+1})-\mb{v}(\mb{w}_{k})
    &= \mb{v}(\mb{w}_{k}-\frac{\eta}{M}(\mb{F}_k+\lambda \mb{I})^{-1}\mb{J}_k(\mb{v}(\mb{w}_{k})-\mb{y})) - \mb{v}(\mb{w}_{k})\\
    &= -\int_{s=0}^{1}\langle\frac{\partial \mb{v}(\mb{w}_k(s))}{\partial \mb{w}^{\text{T}}}, \frac{\eta}{M}(\mb{F}_k+\lambda \mb{I})^{-1}\mb{J}_k(\mb{v}(\mb{w}_{k})-\mb{y})\rangle ds \\
    &= -\int_{s=0}^{1}\langle\frac{\partial \mb{v}(\mb{w}_k)}{\partial \mb{w}^{\text{T}}}, \frac{\eta}{M}(\mb{F}_k+\lambda \mb{I})^{-1}\mb{J}_k(\mb{v}(\mb{w}_{k})-\mb{y})\rangle ds \\
    &+ \int_{s=0}^{1}\langle\frac{\partial \mb{v}(\mb{w}_k)}{\partial \mb{w}^{\text{T}}} - \frac{\partial \mb{v}(\mb{w}_k(s))}{\partial \mb{w}^{\text{T}}}, \frac{\eta}{M}(\mb{F}_k+\lambda \mb{I})^{-1}\mb{J}_k(\mb{v}(\mb{w}_{k})-\mb{y})\rangle ds.
\end{aligned}
\end{equation}
We denote the first integral term as $\mb{\Delta}_1$, and the second one as $\mb{\Delta}_2$. Consequently, we have:
\begin{align*}
    \mb{\Delta}_1 = \frac{\eta}{M}\mb{J}_k^{\text{T}}(\mb{F}_k+\lambda \mb{I})^{-1}\mb{J}_k(\mb{y}-\mb{v}(\mb{w}_{k})).
\end{align*}
Now, we bound the norm of $\mb{\Delta}_2$:
\begin{equation}
\begin{aligned}
    \|\mb{\Delta}_2\|_2 &\leq \frac{\eta}{M} \| \int_{s=0}^{1} \mb{J}(\mb{w}_{k}) - \mb{J}(\mb{w}_{k}(s)) ds\|_2 \|(\mb{F}_k+\lambda \mb{I})^{-1}\mb{J}_k(\mb{v}(\mb{w}_{k})-\mb{y}) \|_2 \\
    &\overset{(a)}{\leq} \frac{\eta}{M}\frac{\lambda_{\min}}{\sqrt{\lambda_{\max}}} \|(\mb{F}_k+\lambda \mb{I})^{-1}\mb{J}_k(\mb{v}(\mb{w}_{k})-\mb{y}) \|_2 \\
    &\overset{(b)}{\leq} \frac{\eta}{M}\frac{\lambda_{\min}}{\sqrt{\lambda_{\max}}}\| \frac{1}{\lambda}\Tilde{\mb{J}}\|_2 \| \mb I - (\lambda M \mb I + \mb G)^{-1}\mb G\|_2 \|\mb{K} (\mb{v}(\mb{w}_{k})-\mb{y}) \|_2 \\
    &\overset{(c)}{ = }\frac{\eta\lambda_{\min}}{M\sqrt{\lambda_{\max}}} \sqrt{\frac{\lambda M }{\lambda M + \lambda_{\min}}}\| \frac{1}{\lambda}\Tilde{\mb{J}}\|_2 \|\mb{K} (\mb{v}(\mb{w}_{k})-\mb{y}) \|_2 \\
    & \overset{(d)}{=} \frac{\eta \lambda_{\min}\sqrt{L}}{\sqrt{\lambda^2 M^2 + \lambda_{\min}\lambda M}} \|\mb{v}(\mb{w}_{k})-\mb{y} \|_2.
\end{aligned}
\end{equation}
where in (a) we use Assumption \ref{as2}, which implies
\begin{align*}
    \| \int_{s=0}^{1} \mb{J}(\mb{w}_{k}) - \mb{J}(\mb{w}_{k}(s)) ds\|_2 
    &\leq \|  \mb{J}(\mb{w}_{k}) -  \mb{J}(\mb{w}_{0})\|_2 +  \|  \mb{J}(\mb{w}_{k+1}) -  \mb{J}(\mb{w}_{0})\|_2 \\
    &\leq \frac{\lambda_{\min}}{\sqrt{\lambda_{\max}}}
\end{align*}
The inequality (b) follows the fact that
\begin{align*}
    (\mb{F}_k+\lambda \mb{I})^{-1}\mb{J}_k(\mb{v}(\mb{w}_{k})-\mb{y})
    &= (\frac{1}{M}\Tilde{\mb{J}}\Tilde{\mb{J}}^{\text{T}}+\lambda \mb{I})^{-1} \Tilde{\mb{J}}\mb{K} (\mb{v}(\mb{w}_{k})-\mb{y}) \\
    & =\frac{1}{\lambda}(\mb{I} - \frac{1}{M}\Tilde{\mb{J}}(\lambda \mb I + \frac{1}{M} \Tilde{\mb{J}}^{\text{T}}\Tilde{\mb{J}})^{-1}\Tilde{\mb{J}}^{\text{T}})\Tilde{\mb{J}}\mb{K} (\mb{v}(\mb{w}_{k})-\mb{y}) \\
    &=\frac{1}{\lambda}\Tilde{\mb{J}}(\mb I - (\lambda M \mb I + \mb G)^{-1}\mb G)\mb{K} (\mb{v}(\mb{w}_{k})-\mb{y}) 
\end{align*}
In the equality (c), we use the fact that 
\begin{align*}
    \| \mb I - (\lambda M \mb I + \mb G)^{-1}\mb G\|_2 
    &= \sigma_{\max}(\mb I - (\lambda M \mb I + \mb G)^{-1}\mb G)\\
    &=\sqrt{\frac{\lambda M}{\lambda M + \lambda_{\min}}}
\end{align*}
Here, we derive the maximal singular value of $\mb I - (\lambda M \mb I + \mb G)^{-1}\mb G$ using Lemma \ref{lem1}. We have
\begin{align*}
    \lambda_{max}(\mb I - (\lambda M \mb I + \mb G)^{-1}\mb G) &= \max_{\mu} 1-\frac{\mu}{\lambda M + \mu} \\
    &=  \frac{\lambda M}{\lambda M + \lambda_{\min}}
\end{align*}
\begin{lemma}
\label{lem1}
 The set of eigenvectors for $\mb G$ is equivalent to the set of eigenvectors for $(\mb G + \lambda M \mb I)^{-1} \mb G$. For eigenvalues, we have $\lambda((\mb G + \lambda M \mb I)^{-1} \mb G)= \frac{\mu}{\lambda M + \mu} $, where $\mu$ denotes the eigenvalue of $\mb G$. \\
\end{lemma}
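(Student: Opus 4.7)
The plan is to argue by direct spectral computation, exploiting the fact that $\mb G$ and $(\mb G + \lambda M \mb I)^{-1}\mb G$ are simultaneously diagonalizable since they commute. First I would fix an eigenpair $(\mu, \mb v)$ of $\mb G$, so that $\mb G \mb v = \mu \mb v$. Note that by Assumption \ref{as1}, $\mb G$ is positive definite with $\mu \geq \lambda_{\min} > 0$, and since $\lambda, M > 0$ we have $\mu + \lambda M > 0$, so $\mb G + \lambda M \mb I$ is invertible.

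Next I would show that $\mb v$ is also an eigenvector of $(\mb G + \lambda M \mb I)^{-1}$ with eigenvalue $(\mu + \lambda M)^{-1}$. This follows by applying $(\mb G + \lambda M \mb I)$ to $\mb v$, yielding $(\mu + \lambda M)\mb v$, and then multiplying both sides by the inverse. Composing with $\mb G \mb v = \mu \mb v$ gives
\begin{equation*}
    (\mb G + \lambda M \mb I)^{-1} \mb G \, \mb v \;=\; (\mb G + \lambda M \mb I)^{-1} (\mu \mb v) \;=\; \frac{\mu}{\mu + \lambda M}\, \mb v,
\end{equation*}
which establishes the claimed eigenvalue formula.

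For the reverse inclusion of eigenvector sets, the key observation is that $\mb G$ and $(\mb G + \lambda M \mb I)^{-1}$ commute (both can be expressed as polynomials/rational functions of $\mb G$), and hence $(\mb G + \lambda M \mb I)^{-1}\mb G$ commutes with $\mb G$. Since $\mb G$ is symmetric positive definite, it admits an orthonormal eigenbasis; any matrix that commutes with $\mb G$ can be block-diagonalized with respect to the eigenspace decomposition of $\mb G$, so one can choose a common eigenbasis for both matrices. Thus the eigenspaces of $\mb G$ and $(\mb G + \lambda M \mb I)^{-1}\mb G$ coincide.

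The argument is essentially routine spectral algebra, and I do not anticipate a genuine obstacle; the only subtlety worth flagging is the case of repeated eigenvalues of $\mb G$, where "equivalence of eigenvectors" must be interpreted at the level of eigenspaces rather than individual vectors. I would resolve this by invoking simultaneous diagonalizability, as outlined above, to make the correspondence precise regardless of multiplicity.
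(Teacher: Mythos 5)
Your forward direction is exactly the paper's: from $\mb G\mb v=\mu\mb v$ you pass to $(\mb G+\lambda M\mb I)\mb v=(\mu+\lambda M)\mb v$, invert, and compose with $\mb G$ to obtain the eigenvalue $\frac{\mu}{\mu+\lambda M}$. That half is correct and complete.

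The reverse inclusion is where you diverge from the paper, and as written it has a gap. Writing $f(\mu)=\frac{\mu}{\mu+\lambda M}$, so that $(\mb G+\lambda M\mb I)^{-1}\mb G=f(\mb G)$, the fact that $f(\mb G)$ commutes with $\mb G$ and shares an orthonormal eigenbasis with it does \emph{not} imply that the eigenvector sets coincide: the eigenspace of $f(\mb G)$ for a value $\nu$ is the direct sum of all eigenspaces of $\mb G$ whose eigenvalues $\mu$ satisfy $f(\mu)=\nu$, so if $f$ collapsed two distinct eigenvalues of $\mb G$ to the same value, $f(\mb G)$ would acquire eigenvectors that are not eigenvectors of $\mb G$ (the extreme case $f\equiv c$ gives $f(\mb G)=c\,\mb I$, for which every nonzero vector is an eigenvector). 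To close your argument you must add that $f$ is strictly increasing, hence injective, on $(0,\infty)$, which contains the spectrum of $\mb G$ by Assumption \ref{as1}. The paper sidesteps this entirely with a direct computation: from $(\mb G+\lambda M\mb I)^{-1}\mb G\mb q=\alpha\mb q$ it derives $(1-\alpha)\mb G\mb q=\alpha\lambda M\mb q$ and hence $\mb G\mb q=\frac{\alpha\lambda M}{1-\alpha}\mb q$ (with $\alpha\neq 1$, since $\alpha=1$ would force $\mb q=0$). Either repair is one line. Your remark that the lemma should really be read at the level of eigenspaces is correct and arguably cleaner than the paper's phrasing, but the missing ingredient is the injectivity of $f$ on the spectrum, not the multiplicity bookkeeping.
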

\begin{proof}
On the one hand, we can prove that the eigenvectors of $\mb{G}$ are the eigenvectors of $(\mb G + \lambda M \mb I)^{-1} \mb G$. With $(\mu, \mb p)$ denotes the (eigenvalue, eigenvector) pair of $\mb G$, we have:
\begin{align*}
    &\mb G \mb p = \mu \mb p \\
    &\Rightarrow(\mb G + \lambda M \mb I) \mb p = (\mu + \lambda M)\mb p \\
    &\Rightarrow\frac{\mu}{\mu + \lambda M} \mb p = (\mb G + \lambda M \mb I)^{-1} \mb G \mb p
\end{align*}
On the other hand, we will prove that the eigenvectors of $(\mb G + \lambda M \mb I)^{-1} \mb G$ are also the eigenvectors of $\mb G$. With $(\alpha, \mb q)$ denotes the (eigenvalue, eigenvector) pair of $(\mb G + \lambda M \mb I)^{-1} \mb G$, we have:\\
\begin{align*}
    &(\mb G + \lambda M \mb I)^{-1} \mb G \mb q = \alpha \mb q \\
    &\Rightarrow \mb G \mb q = \alpha (\lambda M \mb I + \mb G) \mb q \\
    &\Rightarrow (1-\alpha)\mb G \mb q = \alpha\lambda M \mb q \\
    &\Rightarrow \mb G \mb q = \frac{\alpha\lambda M}{1-\alpha} \mb q
\end{align*}
In conclusion, eigenvectors sets of the two matrices are the same.\\
\end{proof}
In the equality (d), we use the facts that $\| \Tilde{\mb{J}}\|_2 = \sqrt{\lambda_{\max}( \mb{G})} = \sqrt{\lambda_{max}}$ and $\|\mb{K} (\mb{v}(\mb{w}_{k})-\mb{y}) \|_2 = \sqrt{L} \|\mb{v}(\mb{w}_{k})-\mb{y}\|_2$.\\
Finally, we have:
\begin{align*}
    ||\mb{v}_{k+1} - \mb{y}||_2^2 &= || \mb{v}_{k}-\mb{y}+\mb{v}_{k+1} - \mb{v}_{k}||_2^2 \\
    &= || \mb{v}_{k}-\mb{y}||_2^2 -2(\mb y - \mb{v}_k)^{\text{T}}(\mb{v}_{k+1}-\mb{v}_{k})+||\mb{v}_{k+1} - \mb{v}_{k}||_2^2 \\
    &\leq || \mb{v}_{k}-\mb{y}||_2^2 - \frac{2\eta}{M}\underbrace{(\mb y - \mb{v}_k)^{\text{T}} \mb{J}_k^{\text{T}}(\mb{F}_k+\lambda \mb{I})^{-1}\mb{J}_k(\mb{y}-\mb{v}_{k}))}_{\textcircled{1}} + \frac{2\eta \lambda_{\min}\sqrt{L}}{\sqrt{\lambda^2 M^2 + \lambda_{\min}\lambda M}} \|\mb{v}_{k}-\mb{y} \|_2^2 + \underbrace{\|\mb{v}_{k+1} - \mb{v}_{k}\|_2^2}_{\textcircled{2}}\\
    &\leq (1-\frac{2\lambda_{\min}\eta L}{\lambda M + \lambda_{\min}} +\frac{2\eta \lambda_{\min}\sqrt{L}}{\sqrt{\lambda^2 M^2 + \lambda_{\min}\lambda M}} + \eta^2(  L\sqrt{\frac{\lambda_{\max}}{\lambda^2 M^2 + \lambda_{\max}\lambda M}} + \frac{\lambda_{\min}\sqrt{L}}{\sqrt{\lambda^2 M^2 + \lambda_{\min}\lambda M}})^2)\|\mb{v}_{k}-\mb{y} \|_2^2 \\
    &= (1-\eta) \|\mb{v}_{k}-\mb{y} \|_2^2 \\ 
    &+ \eta(\eta ( L\sqrt{\frac{\lambda_{\max}}{\lambda^2 M^2 + \lambda M\lambda_{\max}}} + \frac{\lambda_{\min}\sqrt{L}}{\sqrt{\lambda^2 M^2 + \lambda_{\min}\lambda M}})^2 - (\frac{2\lambda_{\min}L}{\lambda M + \lambda_{\min}} - \frac{2 \lambda_{\min}\sqrt{L}}{\sqrt{\lambda^2 M^2 + \lambda_{\min}\lambda M}} -1 ))\|\mb{v}_{k}-\mb{y} \|_2^2
\end{align*}
The part $\textcircled{1}$ is lower bounded as follows:
\begin{align*}
    (\mb y - \mb{v}_k)^{\text{T}} \mb{J}_k^{\text{T}}(\mb{F}_k+\lambda \mb{I})^{-1}\mb{J}_k(\mb{y}-\mb{v}_{k})) 
    & \geq \lambda_{\min} (\Tilde{\mb{J}}_k^{\text{T}}(\mb{F}_k+\lambda \mb{I})^{-1}\Tilde{\mb{J}}_k) \|\mb K (\mb{v}_k - \mb y)\|_2^2\\
    &= L\lambda_{\min} (\Tilde{\mb{J}}_k^{\text{T}}(\mb{F}_k+\lambda \mb{I})^{-1}\Tilde{\mb{J}}_k) \|\mb{v}_k - \mb y\|_2^2 \\
    &= \frac{\lambda_{\min} M L}{\lambda M + \lambda_{\min}}  \|\mb{v}_k - \mb y\|_2^2
\end{align*}
For the third line, we have:
    \begin{align*}
    \Tilde{\mb{J}}_k^{\text{T}}(\mb{F}_k+\lambda \mb{I})^{-1}\Tilde{\mb{J}}_k
    &= \frac{1}{\lambda}\Tilde{\mb{J}}^{\text{T}}(\mb{I} - \frac{1}{M}\Tilde{\mb{J}}(\lambda \mb I + \frac{1}{M} \Tilde{\mb{J}}^{\text{T}}\Tilde{\mb{J}})^{-1}\Tilde{\mb{J}}^{\text{T}})\Tilde{\mb{J}} \\
    &= \frac{1}{\lambda}\mb{G}(\mb{I}-(\lambda M \mb I + \mb G)^{-1}\mb G),
\end{align*}
and the minimal eigenvalue of $\mb{G}(\mb{I}-(\lambda M \mb I + \mb G)^{-1}\mb G)$ is obtained by Lemma \ref{lemma2}.
\begin{lemma}\label{lemma2}
   The set of eigenvectors for $\mb G$ is equivalent to the set of eigenvectors for $\mb{G}(\mb{I}-(\lambda M \mb I + \mb G)^{-1}\mb G)$. For eigenvalues, we have $\lambda(\mb{G}(\mb{I}-(\lambda M \mb I + \mb G)^{-1}\mb G))= \frac{\mu\lambda M}{\lambda M + \mu} $, where $\mu$ denotes the eigenvalue of $\mb G$. 
\end{lemma}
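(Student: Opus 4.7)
The plan is to mirror the bidirectional argument used in the proof of Lemma 1. First I would establish a key algebraic simplification by observing that $(\lambda M \mb I + \mb G)(\mb I - (\lambda M \mb I + \mb G)^{-1}\mb G) = \lambda M \mb I$, which yields the identity $\mb I - (\lambda M \mb I + \mb G)^{-1}\mb G = \lambda M (\lambda M \mb I + \mb G)^{-1}$. Consequently the target matrix reduces to $\mb G(\mb I - (\lambda M \mb I + \mb G)^{-1}\mb G) = \lambda M \mb G (\lambda M \mb I + \mb G)^{-1}$. This rewrite is the critical step because it exhibits the matrix as a product of two operators which, by Lemma 1, already share their eigenvector basis with $\mb G$.

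With this simplification in hand, the forward direction becomes immediate. For any eigenpair $(\mu, \mb p)$ of $\mb G$, Lemma 1 gives $(\lambda M \mb I + \mb G)^{-1}\mb p = \frac{1}{\lambda M + \mu}\mb p$, so multiplying on the left by $\lambda M \mb G$ produces the eigenvalue $\frac{\mu \lambda M}{\lambda M + \mu}$ claimed in the statement.

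For the reverse direction, suppose $(\alpha, \mb q)$ satisfies $\lambda M \mb G (\lambda M \mb I + \mb G)^{-1}\mb q = \alpha \mb q$. I would introduce the auxiliary vector $\mb u = (\lambda M \mb I + \mb G)^{-1}\mb q$, substitute and rearrange to arrive at $(\lambda M - \alpha)\mb G \mb u = \alpha \lambda M \mb u$. This identifies $\mb u$ as an eigenvector of $\mb G$, and since $\mb q = (\lambda M \mb I + \mb G)\mb u$ lies in the same one-dimensional subspace, $\mb q$ is an eigenvector of $\mb G$ as well.

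The main obstacle I anticipate is the degenerate case $\alpha = \lambda M$ appearing in the reverse direction: the prefactor $(\lambda M - \alpha)$ would vanish and the rearrangement could not be inverted. This case is ruled out by observing that it would force $\alpha \lambda M = 0$, contradicting $\lambda_{\min}(\mb G) > 0$ from Assumption 1 combined with $\lambda, M > 0$. Beyond this short case-check the argument is a near-verbatim analogue of Lemma 1, so I do not expect further difficulty.
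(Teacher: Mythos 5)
Your proof is correct and takes essentially the same route as the paper's: the forward direction applies Lemma 1 to an eigenpair of $\mb G$, and the reverse direction rests on the same key identity $\mb I - (\lambda M \mb I + \mb G)^{-1}\mb G = \lambda M(\lambda M \mb I + \mb G)^{-1}$ that the paper derives before rearranging to exhibit $\mb q$ as an eigenvector of $\mb G$ (your auxiliary vector $\mb u$ is only a cosmetic detour, since $\mb q$ is a scalar multiple of $\mb u$). Your explicit dismissal of the degenerate case $\alpha = \lambda M$ is a minor tightening of the paper's argument, which divides by $\lambda M - \alpha$ without comment.
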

\begin{proof}
On the one hand, we can prove that the eigenvectors of $\mb{G}$ are the eigenvectors of $\mb{G}(\mb{I}-(\lambda M \mb I + \mb G)^{-1}\mb G)$.\\
Let $(\mu, \mb p)$ denotes the (eigenvalue, eigenvector) pair of $\mb G$. With the help of \cref{lem1}, we have:
\begin{align*}
&\mb G \mb p = \mu \mb p \\
&\Rightarrow(\mb G + \lambda M \mb I)^{-1} \mb G \mb p = 
   \frac{\mu}{\mu + \lambda M} \mb p \\
&\Rightarrow \mb I - (\mb G + \lambda M \mb I)^{-1} \mb G \mb p = \frac{\lambda M}{\mu + \lambda M} \mb p \\
&\Rightarrow\mb G( \mb I - (\mb G + \lambda M \mb I)^{-1} \mb G) \mb p = \frac{\lambda M}{\mu + \lambda M} \mb G\mb p\\
&\Rightarrow\mb G( \mb I - (\mb G + \lambda M \mb I)^{-1} \mb G) \mb p = \frac{\mu\lambda M}{\mu + \lambda M}\mb p
\end{align*}
On the other hand, we will prove that the eigenvectors of $\mb{G}(\mb{I}-(\lambda M \mb I + \mb G)^{-1}\mb G)$ are also the eigenvectors of $\mb G$. With $(\alpha, \mb q)$ denotes the (eigenvalue, eigenvector) pair of $\mb{G}(\mb{I}-(\lambda M \mb I + \mb G)^{-1}\mb G)$, we have:\\
\begin{align*}
    &\mb G( \mb I - (\mb G + \lambda M \mb I)^{-1} \mb G) \mb q = \alpha \mb q \\
    & \Rightarrow (\mb I - \mb G(\mb G + \lambda M \mb I)^{-1})\mb G \mb q= \alpha \mb q \\
    &  \Rightarrow (\mb (\mb G + \lambda M \mb I)(\mb G + \lambda M \mb I)^{-1} - \mb G(\mb G + \lambda M \mb I)^{-1})\mb G \mb q= \alpha \mb q \\
    &\Rightarrow \lambda M(\mb G + \lambda M \mb I)^{-1}\mb G \mb q= \alpha \mb q \\
    &\Rightarrow \mb G \mb q= \frac{\alpha}{\lambda M} (\mb G + \lambda M \mb I)\mb q \\
    &\Rightarrow \mb G \mb q= \frac{\alpha {\lambda} M}{\lambda M - \alpha}\mb q
\end{align*}
Hence, we conclude the proof.   
\end{proof}
The part $\textcircled{2}$ is upper bounded using inequality (1) and (2), as follows:
\begin{align*}
    \|\mb{v}_{k+1} - \mb{v}_{k}\|_2
    & \leq \frac{\eta}{M}\|\mb{J}_k^{\text{T}}(\mb{F}_k+\lambda \mb{I})^{-1}\mb{J}_k(\mb{y}-\mb{v}_{k}))\|_2
    + \|\Delta_2\|_2 \\
    & \leq \frac{\eta L}{M} \|\Tilde{\mb{J}}_k^{\text{T}}(\mb{F}_k+\lambda \mb{I})^{-1}\Tilde{\mb{J}}_k)\|_2 \|\mb{v}_{k}-\mb{y} \|_2 + \frac{\eta \lambda_{\min}\sqrt{L}}{\sqrt{\lambda^2 M^2 + \lambda_{\min}\lambda M}}\|\mb{v}_{k}-\mb{y} \|_2 \\
    & =(  \frac{\eta L}{\lambda M} \sqrt{\frac{\lambda_{\max} \lambda M}{\lambda M + \lambda_{\max}}} + \frac{\eta \lambda_{\min}\sqrt{L}}{\sqrt{\lambda^2 M^2 + \lambda_{\min}\lambda M}})\|\mb{v}_{k}-\mb{y} \|_2
\end{align*}
In the equality above, the maximal singular value of $\Tilde{\mb{J}}_k^{\text{T}}(\mb{F}_k+\lambda \mb{I})^{-1}\Tilde{\mb{J}}_k)$ can be derived by Lemma \ref{lemma2}.

Let us consider the function $\operatorname{f}(\lambda M):=\frac{2\lambda_{\min}L}{\lambda M + \lambda_{\min}} - \frac{2\lambda_{\min}\sqrt{L}}{\sqrt{\lambda^2 M^2 + \lambda_{\min}\lambda M}} -1$. We have that\\
\begin{align*}
    \operatorname{f}(\lambda_{\min})=L-\sqrt{2L}-1\geq 0 \quad\text{for}\quad L\geq 4.
\end{align*}
For such choice of damping value, and for a small enough learning rate, \ie\\
\begin{align}
    \eta \leq \frac{L-\sqrt{2L}-1}{(L\sqrt{\frac{\lambda_{\max}}{\lambda_{\min}^2+\lambda_{\min}\lambda_{\max}}}+\frac{\sqrt{2L}}{2})^2} = \Tilde{\eta},
\end{align}
we can get that 
\begin{align*}
    \|\mb{v}_{k+1}-\mb{y} \|_2^2 \leq (1-\eta)\|\mb{v}_k-\mb{y} \|_2^2,
\end{align*}
which concludes the proof of \cref{th1}.

So far, we have assumed $\mb{w}_{k+1}$ fall within a certain radius around the initialization. We now justify this assumption.
\begin{lemma}
    If Assumptions \ref{as1} and \ref{as2} hold, when the optimization process converges to $\mb{w}_{k}$, we have 
\begin{align*}
    \| \mb{w}_{k+1} - \mb{w}_0 \|_2 \leq \frac{\sqrt{\lambda_{\max}L}}{\lambda_{\min}}\|\mb{v}(\mb w_0)-\mb y\|_2.  
\end{align*}
\end{lemma}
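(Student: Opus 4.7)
The plan is to telescope $\mb w_{k+1} - \mb w_0 = \sum_{i=0}^{k}(\mb w_{i+1} - \mb w_i)$ and bound each individual step using the SMW rewriting already established in the proof of \cref{th1}. Recall from that proof the identity
$$(\mb F_i + \lambda \mb I)^{-1}\mb J_i(\mb v(\mb w_i) - \mb y) = \tfrac{1}{\lambda}\Tilde{\mb J}\bigl(\mb I - (\lambda M \mb I + \mb G)^{-1}\mb G\bigr)\mb K(\mb v(\mb w_i) - \mb y).$$
Taking operator norms and using $\|\Tilde{\mb J}\|_2 = \sqrt{\lambda_{\max}}$, $\|\mb K\|_2 = \sqrt{L}$, and $\|\mb I - (\lambda M\mb I + \mb G)^{-1}\mb G\|_2 = \sqrt{\lambda M /(\lambda M + \lambda_{\min})}$ (the last via \cref{lem1}), the FNGD update satisfies
$$\|\mb w_{i+1} - \mb w_i\|_2 \;\leq\; \frac{\eta}{M}\cdot\frac{\sqrt{\lambda_{\max} L}}{\lambda}\sqrt{\frac{\lambda M}{\lambda M + \lambda_{\min}}}\,\|\mb v(\mb w_i) - \mb y\|_2.$$

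Next, I would plug in the prescribed damping $\lambda M = \lambda_{\min}$, which collapses the inner square-root factor to $1/\sqrt{2}$ and cancels the stray $M$, and then invoke the linear convergence of \cref{th1} to replace $\|\mb v(\mb w_i) - \mb y\|_2$ by $(1-\eta)^{i/2}\|\mb v(\mb w_0) - \mb y\|_2$. Summing the resulting geometric series gives
$$\|\mb w_{k+1} - \mb w_0\|_2 \;\leq\; \frac{\sqrt{\lambda_{\max} L}}{\lambda_{\min}}\cdot\frac{\eta}{\sqrt{2}\,(1-\sqrt{1-\eta})}\,\|\mb v(\mb w_0) - \mb y\|_2,$$
and the elementary identity $\eta/(1 - \sqrt{1-\eta}) = 1 + \sqrt{1-\eta}$, which is bounded above by $2$, absorbs the remaining constants into the desired prefactor $\sqrt{\lambda_{\max} L}/\lambda_{\min}$ (up to an innocuous universal constant that can be tightened by sharpening the per-step bound or by keeping only the $1+\sqrt{1-\eta}$ factor alongside the $1/\sqrt 2$).

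The main obstacle is a logical circularity rather than a computational one: \cref{th1}'s linear convergence was itself proved under \cref{as2}, which presupposes that every visited iterate lies inside the ball of radius $\sqrt{\lambda_{\max} L}/\lambda_{\min}\cdot\|\mb v(\mb w_0) - \mb y\|_2$ around $\mb w_0$, and the present lemma is exactly what certifies this containment. I would resolve this by a standard bootstrapping induction on $k$: assume $\mb w_0,\ldots,\mb w_k$ all lie inside the ball, so \cref{as2} is valid along the entire path and \cref{th1} applies to iterations $0,\ldots,k-1$; the telescoping-plus-geometric-series argument above then shows $\mb w_{k+1}$ also lies inside the ball, closing the induction and retroactively validating the use of \cref{as2} in the earlier convergence proof.
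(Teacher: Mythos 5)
Your proof follows essentially the same route as the paper's: telescope the iterates, bound each step by $\frac{\eta}{\lambda M}\sqrt{\lambda_{\max}\lambda M L/(\lambda M+\lambda_{\min})}\,\|\mb v(\mb w_i)-\mb y\|_2$ via the SMW identity and the spectral lemmas, set $\lambda M=\lambda_{\min}$, and sum the geometric series $(1-\eta)^{i/2}$ coming from \cref{th1}. You are in fact more careful than the paper on the two points it glosses over: the residual constant $(1+\sqrt{1-\eta})/\sqrt{2}\leq\sqrt{2}$, which does not quite collapse into the stated prefactor in the paper's final "$<$" step either, and the bootstrapping induction needed to break the circularity between this lemma and the convergence theorem, which the paper leaves implicit.
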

\begin{proof}
\begin{align*}
    \| \mb{w}_{k+1} - \mb{w}_0 \|_2 &\leq \frac{\eta}{M} \sum_{i=0}^{k} \|(\mb F_i + \lambda \mb I)^{-1} \mb J_i (\mb{v}(\mb w_i)-\mb y) \|_2\\
    &\leq \frac{\eta}{\lambda M}\sqrt{\frac{\lambda_{\max}\lambda M L}{\lambda M + \lambda_{\min}}}\sum_{i=0}^{k}  \|\mb{v}(\mb w_i)-\mb y\|_2 \\
    &= \frac{\eta\sqrt{\lambda_{\max}L}}{\sqrt{2}\lambda_{\min}}\sum_{i=0}^{k}  \|\mb{v}(\mb w_i)-\mb y\|_2 \\
    &\leq \frac{\eta\sqrt{\lambda_{\max}L}}{\sqrt{2}\lambda_{\min}}\sum_{i=0}^{k} (1-\eta)^{\frac{i}{2}} \|\mb{v}(\mb w_0)-\mb y\|_2 \\
    &< \frac{\sqrt{\lambda_{\max}L}}{\lambda_{\min}}\|\mb{v}(\mb w_0)-\mb y\|_2.
\end{align*}
\end{proof}
\end{appendices}

\end{document}